\theoremstyle{plain}
\newtheorem{theorem}{Theorem}[section]
\newtheorem{proposition}[theorem]{Proposition}
\newtheorem{lemma}[theorem]{Lemma}
\theoremstyle{definition}
\newtheorem{assumption}[theorem]{Assumption}
\theoremstyle{remark}
\newcommand{\algo}{{\small \sc\textsf{HdSafeBO}}}
\newcommand{\update}{{\small \sc\textsf{SafeUpdate}}}
\definecolor{blue1}{rgb}{0.310, 0.443, 0.745} 
\definecolor{green1}{rgb}{0.494, 0.671, 0.333} 
\definecolor{yellow1}{rgb}{0.961, 0.765, 0.259} 
\title{Safe Bayesian Optimization for the Control of High-Dimensional Embodied Systems}
\author{
  Yunyue Wei, Zeji Yi, Hongda Li, Saraswati Soedarmadji, Yanan Sui\\
  Tsinghua University\\
  \texttt{weiyy20@mails.tsinghua.edu.cn}\\
  \texttt{zejiy@andrew.cmu.edu}\\
  \texttt{lhd21@mails.tsinghua.edu.cn}\\
  \texttt{chenxuying24@mails.tsinghua.edu.cn}\\
  \texttt{ysui@tsinghua.edu.cn}\\
}
\begin{document}
\maketitle


\begin{abstract}
Learning to move is a primary goal for animals and robots, where ensuring safety is often important when optimizing control policies on the embodied systems. For complex tasks such as the control of human or humanoid control, the high-dimensional parameter space adds complexity to the safe optimization effort. Current safe exploration algorithms exhibit inefficiency and may even become infeasible with large high-dimensional input spaces. Furthermore, existing high-dimensional constrained optimization methods neglect safety in the search process. In this paper, we propose High-dimensional Safe Bayesian Optimization with local optimistic exploration (\algo), a novel approach designed to handle high-dimensional sampling problems under probabilistic safety constraints. We introduce a local optimistic strategy to efficiently and safely optimize the objective function, providing a probabilistic safety guarantee and a cumulative safety violation bound. Through the use of isometric embedding, \algo~addresses problems ranging from a few hundred to several thousand dimensions while maintaining safety guarantees. To our knowledge, \algo~is the first algorithm capable of optimizing the control of high-dimensional musculoskeletal systems with high safety probability. We also demonstrate the real-world applicability of \algo~through its use in the safe online optimization of neural stimulation induced human motion control.

\end{abstract}

\keywords{Safe Bayesian Optimization, High-dimensional Embodied System}

\section{Introduction}

Some robotics applications require online optimization of control policies for performance while avoiding unsafe parameter tuning that could potentially damage the systems. These scenarios correspond to the problem of \emph{safe exploration}, which involves the sequential optimization of an unknown objective function under the constraint of satisfying unknown safety conditions.
Bayesian optimization (BO) is an effective paradigm in optimizing black-box functions, and safe BO methods have been successfully used to tune control parameters for various robotic systems \citep{berkenkamp2016safe, sukhija2022scalable, widmer2023tuning}.
Most existing safe BO methods use the Gaussian process (GP) to model the underlying safety function, and discriminate safe regions with estimated function's lower confidence bound to ensure safety with high probability \citep{sui2015safe}. Such conservative strategies are inefficient for objective optimization, and even infeasible in high-dimensional input settings, such as human or humanoid system control.

A motivating application of our work is the control of musculoskeletal (tendon-driven) systems, where complex motions are driven by dozens to hundreds of muscle-tendon units rather than joints. Such overactuated embodied systems introduce additional control challenges within a large-scale parameter space. 
Under this high-dimensional input setting, efficiently optimizing the task function while maintaining safety remains a formidable challenge for existing safe optimization algorithms.
Despite considerable efforts to leverage BO for solving high-dimensional constrained optimization problems, these methods often fail to incorporate safety considerations into the optimization procedure \citep{griffiths2020constrained, notin2021improving, eriksson2021scalable}. 
To our best knowledge, there are currently no methods that guarantee safety, or probabilistic safety during the high-dimensional optimization.

In this paper, we introduce High-dimensional Safe Bayesian Optimization with local optimistic exploration (\algo) to address probabilistic safety in high-dimensional sequential optimization problems, as shown in Figure \ref{alg:overall}. To achieve efficient and safe optimization, we propose a \emph{local optimistic strategy} with probabilistic safety guarantee and cumulative safety violation bound.
By using isometric embedding for dimension reduction, we enable \algo~to handle even higher dimensional problems, ranging from a few hundred to several thousand dimensions, while maintaining the probabilistic safety guarantee. Our experimental results show that \algo~efficiently learns to control a musculoskeletal system with high safety probability - a task where all baseline methods fail.
We also demonstrate the success of \algo~in real-world experiments to safely optimize the control of neural stimulation induced human motion. Our project page is at  \url{https://lnsgroup.cc/research/hdsafebo}

\begin{figure}[tbh] 
        \centering
		\includegraphics[width=1\linewidth]{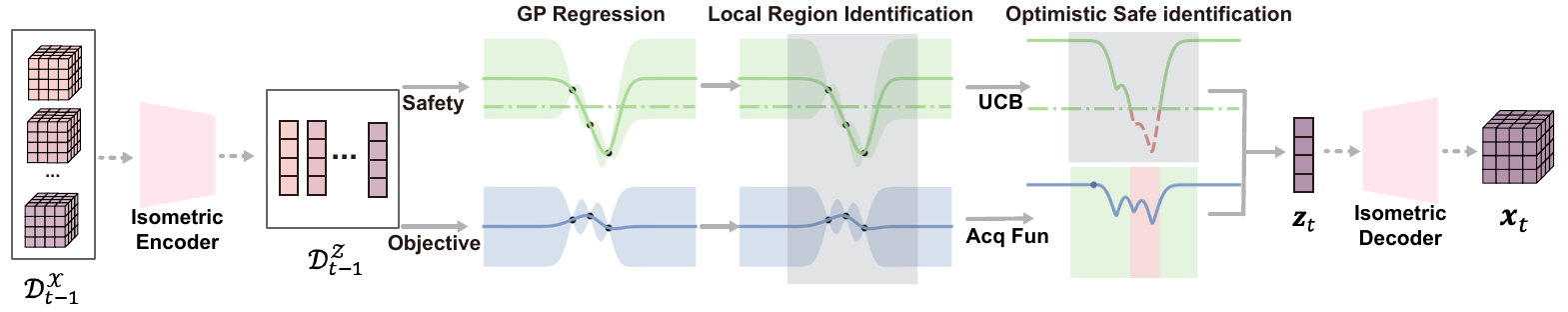}
    \caption{ \textbf{Workflow of \algo}. 
    Local optimistic safe optimization is employed to efficiently optimize the objective function while guaranteeing probabilistic safety. We utilize isometric embedding to reduce the problem dimension to deal with high-dimensional inputs. 
    }
    \label{fig:algo}

\end{figure}

\section{Related Work}

\subsection{Safe Bayesian Optimization}

The sequential decision-making problem with safety constraints has been extensively studied, varied by the definition of safety. To achieve full safety during exploratory sampling, algorithms have been proposed with theoretical guarantee in near-optimality and safety with high probability \citep{sui2015safe, sui2018stagewise, berkenkamp2016safe, turchetta2019safe, baumann2021gosafe}. These methods have been applied in safe parameter tuning of quadrotor \citep{berkenkamp2021bayesian}, robot arm \citep{sukhija2022scalable} and quadruped robots \citep{widmer2023tuning}.

These safe optimization algorithms conservatively estimate and expand the safe region, leading to inefficient optimization performance.

In contrast to a zero-tolerance approach to unsafe actions, an alternative approach allows for limited constraint violations within a predefined budget, trading safety for more efficient optimization\citep{zhou2022kernelized, lu2022no, guo2023rectified}. 
 
A recent work CONFIG uses upper confidence bound to optimistically estimate the safe region, enjoying global optimal guarantee as unconstrained methods \citep{xu2023constrained}.

Another extreme case, called constrained Bayesian optimization, has also been used for finding safe controller parameters \citep{gelbart2014bayesian, hernandez2016general, marco2021robot, gardner2014bayesian, marco2020excursion}. However, this line of methods aim only to find the best feasible solution, neglecting the safety during the optimization process. Constrained expected improvement (cEI) is a popular constrained BO algorithm that introduces feasibility constraints to acquisition function formulation \citep{schonlau1998global, gelbart2014bayesian}. 

All of the aforementioned methods fall in the framework of Bayesian optimization, which is typically limited to low-dimensional problems\citep{frazier2018tutorial, shahriari2015taking}. LineBO demonstrates success in optimizing problems with dimension up to 40 via searching over 1-dimensional subspace at each iteration \citep{kirschner2019adaptive, de2021multi}. When parameter size exceeds 10, safe optimization algorithms might even struggle to expand the safe region due to sparse discretization of the input space.

\subsection{High-dimensional Bayesian Optimization}
Recent research efforts have delved into the utilization of Bayesian optimization in high-dimensional problems  \citep{turner2021bayesian, binois2022survey}. In the main paper, our focus lies on dimension reduction-based and local BO methods. Further discussion about other relevant works can be found in Appendix \ref{sec:add_relate}.

A large body of literature leverages dimension reduction to apply BO over a low-dimensional subspace. Several works use variable selection to identify and optimize important dimensions during optimization \citep{chen2012joint, zhang2019high, song2022monte}. Another popular approach for reducing the search space is random linear embedding, which has been proven to contain the optimal solution with a certain probability relative to the objective function's effective dimension\citep{wang2016bayesian, nayebi2019framework, li2018high, letham2020re, papenmeier2022increasing}. 
Many works also use autoencoder to learn a non-linear mapping between the original space and the latent space \citep{gomez2018automatic, moriconi2020high, tripp2020sample, deshwal2021combining, grosnit2021high, siivola2021good, notin2021improving}.

Another line of works utilizes local search to address the over-exploration issue in high-dimensional optimization \citep{oh2018bock}, achieving better empirical performance than global BO methods \citep{eriksson2019scalable,muller2021local, nguyen2022local, wu2023behavior, yi2024improving}. 
This local optimization strategy is also able to be combined with dimension reduction \citep{maus2022local} and extended to constrained setting (SCBO \citep{eriksson2021scalable}). Local BO methods have also been deployed for optimizing locomotion control in simulation \citep{wang2020learning, song2022monte}.

Besides BO methods, evolutionary algorithms such as CMA-ES are competitive to solve high-dimensional problems \citep{hansen2006cma} and can be extended to constrained setting\citep{kramer2010review, arnold20121+}. 
 
Although many works attempt to solve high-dimensional constrained optimization problems, to the best of our knowledge, there lacks work that addresses safety in high-dimensional sequential optimization with at least hundreds of variables. 

\section{Problem Formulation}\label{section:PF}

We aim to optimize an unknown objective function $f: \mathcal{X} \rightarrow \mathbb{R}$ by sequentially sampling points $\boldsymbol{x}_1, \ldots,  \boldsymbol{x}_t \in \mathcal{X}$.
 
We can also get observations of safety measurement from another unknown function $g: \mathcal{X} \rightarrow \mathbb{R}$. 
We define a point $\boldsymbol{x}$ is safe when $g (\boldsymbol{x}) > 0$. 
We can formally write our optimization problem as follows:
\begin{equation}
    \max _{\boldsymbol{x}_t \in \mathcal{X}} f(\boldsymbol{x}_t) \quad \text { subject to } g(\boldsymbol{x}_t) \geq 0, \forall t \ge 1,
\end{equation}

We consider the above problem formulation widely exists in the application of robotic control, where $\boldsymbol{x}$ represents a parameterized controller, and $f(\boldsymbol{x}), g(\boldsymbol{x})$ are the utility and safety measurement of the controller in a single experimental trial. In our target applications such as the control of embodied systems, the input space $\mathcal{X}\in \mathbb{R}^D$ is high-dimensional, encompassing dozens to thousands of variables.

In Bayesian optimization, Gaussian process is usually used as the surrogate model to learn the unknown functions. Taking $g$ as an example, for samples at points $X_t=[\boldsymbol{x}_1 ... \boldsymbol{x}_t]^T$, we have noise-perturbed observations $\boldsymbol{y}_t=[y^g_1...y^g_t]^T$. The GP posterior over $g$ is also Gaussian with mean $\mu_t(\boldsymbol{x})$, covariance $k_t(\boldsymbol{x},\boldsymbol{x}')$ and variance $\sigma^2_t(\boldsymbol{x},\boldsymbol{x}')$ under kernel function $k$:
\begin{equation}
\begin{split}
    \mu_t(\boldsymbol{x})& = \boldsymbol{k}_t(\boldsymbol{x})^T(\boldsymbol{K}_t+\sigma^2\boldsymbol{I})^{-1}\boldsymbol{y}_t \\
    k_t(\boldsymbol{x},\boldsymbol{x}')& = k(\boldsymbol{x},\boldsymbol{x}')-\boldsymbol{k}_t(\boldsymbol{x})^T (\boldsymbol{K}_t+\sigma^2\boldsymbol{I})^{-1} \textbf{k}_t(\boldsymbol{x}')\\
    \sigma^2_t(\boldsymbol{x}) &= k_t(\boldsymbol{x},\boldsymbol{x}),
\end{split}
\label{eq:gp}
\end{equation}

where $\boldsymbol{k}_t(\boldsymbol{x}) =[k(\boldsymbol{x}_1,\boldsymbol{x}), ..., k(\boldsymbol{x}_t,\boldsymbol{x})]$ is the covariance between $\boldsymbol{x}$ and sampled points, $\boldsymbol{K}_t$ is the covariance of sampled positions:  $[k(\boldsymbol{x},\boldsymbol{x}')]_{\boldsymbol{x},\boldsymbol{x}' \in X_t}$. Similarly we can use GP to derive posterior of $f$. Using the posterior of GP, 
we can define define $u_t(\boldsymbol{x}) \coloneqq \mu_{t-1}(\boldsymbol{x}) + \beta_t \sigma_{t-1}(\boldsymbol{x})$ as the upper confidence bound (UCB) of the function estimation, where $\beta_t$ is a scalar which can be properly set to contain $g(\boldsymbol{x})$ with desired probability. we make the following regularity assumptions that are commonly used in the field of Bayesian optimization:

\begin{assumption}\label{assum: rkhs}
$g$ and $f$ are samples of two Gaussian processes defined by the kernels $k(\cdot,\cdot), k'(\cdot,\cdot)$ respectively. The observations are perturbed by i.i.d. Gaussian noise: $y^g(\boldsymbol{x}_t) = g(\boldsymbol{x}_t) + n_t$, $y^f(\boldsymbol{x}_t) = f(\boldsymbol{x}_t) + n'_t$ where $n_t \sim \mathcal{N}(0, \sigma^2), n'_t \sim \mathcal{N}(0, \sigma'^2)$
\end{assumption}

    \begin{minipage}{0.55\textwidth}
\begin{algorithm} [H]
\caption{\algo} 
\begin{algorithmic}[1]
\renewcommand{\algorithmicrequire}{ \textbf{Input}}
\REQUIRE{
 Sample set $\mathcal{X}$,
GP priors $GP^f, GP^g$, 
safety threshold $h$, acquisition function $A$, initial dataset $\mathcal{D}^\mathcal{X}_0$}, trust region length $l_0$, isometric encoder $\Pi$ and decoder $\Pi^{-1}$
\STATE {$c_s, c_f \leftarrow 0$}
\STATE {$\boldsymbol{x}^*, y* \leftarrow \text{argmax}_{\{\boldsymbol{x}, y^f\}\in \mathcal{D}_0, y^g(\boldsymbol{x})>0} y^f(\boldsymbol{x})$}
\FOR{$t=1$ to \ldots}
\STATE {$D^{\mathcal{Z}}_{t-1} \leftarrow \Pi(D^{\mathcal{X}}_{t-1})$}
\STATE {Update $GP^f, GP^g$ using $\mathcal{D}^{\mathcal{Z}}_{t-1}$}
\STATE  {{\color{purple} {$L_t \leftarrow \{\boldsymbol{z}\in \mathcal{Z} \mid \Pi(\boldsymbol{x}^*)-l_t \le \boldsymbol{z} \le \Pi(\boldsymbol{x}^*) + l_t \}$}}}
\STATE {{\color{teal} {$S_{t} \leftarrow \left\{\boldsymbol{z}^{\prime} \in L_t \mid \mu_{t-1}(\boldsymbol{z}) + \beta_t\sigma_{t-1}(\boldsymbol{z}) \geq 0 \right\}$}}}
\STATE {$\boldsymbol{z}_{t} \leftarrow \operatorname{argmax}_{\boldsymbol{z} \in S_t} \left(A(\boldsymbol{z})\right)$}

\STATE {$\boldsymbol{x}_t \leftarrow$   $\Pi^{-1}({\boldsymbol{z}_t})$}
\STATE {$y^f_t \leftarrow f\left(\boldsymbol{x}_{t}\right)+n_{t}$}
\STATE {$y^g_t \leftarrow g\left(\boldsymbol{x}_{t}\right)+n_{t}$}
\STATE {$\mathcal{D}_t \leftarrow  \mathcal{D}_{t-1} \bigcup \{\boldsymbol{x}_t, y^f_t, y^g_t\}$}
\STATE {{\color{purple} {$\boldsymbol{x}^*, y^*, c_s, c_f, l_t \leftarrow \qquad\qquad\qquad\qquad\qquad\quad$  $\text{\update}(\boldsymbol{x}_t, y^f_t, \boldsymbol{x}^*, y^*, c_s, c_f, l_t)$}}}
\ENDFOR
\end{algorithmic}  
\label{alg:overall}
\end{algorithm}
    \end{minipage}
\hfill
    \begin{minipage}{0.45\textwidth}
\begin{algorithm} [H]
\caption{\update} 
\begin{algorithmic}[1]
\renewcommand{\algorithmicrequire}{ \textbf{Input}}
\REQUIRE{
 Current sample $\boldsymbol{x}_t, y^f_t$,
current best sample $\boldsymbol{x}^*, y^*$, 
current counters $c_s, c_t$
current trust region length $l_t$,
success tolerance $\tau_s$, failure tolerance $\tau_f$,
initial trust region length $l_0$,
trust region limits $l_{\text{max}}, l_{\text{min}}$}
\IF{$y^g_t(\boldsymbol{x}_t) > 0$ and $y^f_t > y^*$}
\STATE {$\boldsymbol{x}^*, y^* \leftarrow \boldsymbol{x}_t, y^f_t; c_s \leftarrow c_s + 1; c_f \leftarrow 0$}
\ELSE
\STATE {$c_s \leftarrow 0; c_f \leftarrow c_f + 1$}
\ENDIF
\IF{$c_s = \tau_s$}
\STATE {$l_t \leftarrow \min(2l_{t-1}, l_{\text{max}}); c_s, c_f \leftarrow 0$}
\ELSIF{$c_f = \tau_f$}
\STATE {$l_t \leftarrow \max(\frac{1}{2}l_{t-1}, l_{\text{min}}); c_s, c_f \leftarrow 0$}
\IF{$l_t = l_{\text{min}}$}
\STATE {$l_t \leftarrow l_0$}
\ENDIF
\ELSE
\STATE {$l_t \leftarrow l_{t-1}$}
\ENDIF
\end{algorithmic}  
\label{alg:update}
\end{algorithm}
    \end{minipage}

\section{High-dimensional Safe Bayesian Optimization}

In high-dimensional space, existing safe optimization methods are too conservative to efficiently optimize the objective function, and would be infeasible due to sparse discretization. Therefore we aim to improve sample efficiency by slightly relaxing the safety constraint to a probabilistic version. The probabilistic safety means each sample point is safe with a probability above predefined threshold $\alpha$, that is $\text{Pr}(g(\boldsymbol{x}_t) \ge 0) \ge \alpha, \forall t \le T$.

In this sense, we introduce \textit{\algo}, an innovative algorithm designed for ensuring probabilistic safety while optimizing in a high-dimensional space.  The workflow of this algorithm is illustrate in Figure \ref{fig:algo} and Algorithm \ref{alg:overall}. 

\algo~first uses the isometric encoder $\Pi$ to reduce the problem dimension, converting the original dataset from $\mathcal{X}$ to the low-dimensional latent space $\mathcal{Z}\in \mathbb{R}^d$ (Line 4). Isometry means the embedded subspace is able to preserve the distance of the original space according to the corresponding metric $d_{\mathcal{X}}, d_{\mathcal{Z}}$, i.e. $d_{\mathcal{X}}(\boldsymbol{x}, \boldsymbol{x}') = d_{\mathcal{Z}}(\Pi(\boldsymbol{x}), \Pi(\boldsymbol{x}'))$.
Leveraging the historical record of function observations, we compute the posterior and confidence interval of both the objective and safety functions through distinct Gaussian processes. (Line 5). Then we define a local region to search over (Line 6), and identify the safe space within the local region using GP upper confidence bound (Line 7). We optimize the acquisition function over the safe space, and project the recommended latent point back to original input space using the decoder $\Pi^{-1}$ (line 8-9). Thompson sampling (TS) \citep{kandasamy2018parallelised} was selected as the acquisition function $A$ due to its compatibility with the discrete nature of our safety estimation and search space, and its innate ability for batch optimization by sampling the GP posterior—an appropriate choice for high-dimensional tasks that support parallel evaluations.
Finally the history data is updated via evaluating new inputs. (Line 10-12), and the local region parameters are updated based on the sample results (Line 13).

In conjunction with isometric embedding, we highlight two important components to improve the efficiency and safety: {\color{teal}Optimistic Safety Identification} and {\color{purple}Local Search via Trust Region}. 
For clear illustration, we first introduce algorithm details under identity mapping $\mathcal{I}(\boldsymbol{x}) = \boldsymbol{x}$, which is a special case of isometric embedding. In this way, the embedded subspace is equivalent to the original space $\mathcal{X}$, and inherits all the assumptions defined in Section 
\ref{section:PF}. 

\subsection{Optimistic safety identification}

To mitigate inefficiency concerns, \algo~distinguishes the safe region through the Gaussian process upper confidence bound of the safety function.  
This optimistic strategy allows for optimization over distinct safe regions and is viable in high-dimensional problems.
In many real-world applications, the search space is inherently constrained by domain priors, where a certain proportion of decisions are safe even under random search. Therefore, we contend that optimistic safety identification is a more practical strategy. Here we derive the appropriate choice of the scalar $\beta_t$ to ensure step-wise probabilistic safety.

\begin{proposition}\label{prop:safe}
 Let Assumptions \ref{assum: rkhs} holds for the latent safety function $g$, and set $\beta_t$ satisfying $\Phi(\beta_t)\le 1-\alpha$. Then at every time step $t$:
\begin{equation}
    \text{Pr}(g(\boldsymbol{x}) \ge \mu_{t-1}(\boldsymbol{x}) + \beta_t\sigma_{t-1}(\boldsymbol{x})) \ge \alpha, \forall \boldsymbol{x} \in \mathcal{X},
\end{equation}

where $\Phi(\cdot)$ is the cumulative distribution function (CDF) of the standard normal distribution $\mathcal{N}(0, 1)$.
\end{proposition}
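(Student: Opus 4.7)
The plan is to reduce the statement to a one-line standard normal tail-probability computation, using only the conjugacy of Gaussian-process posteriors and the definition of $\Phi$.

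First I would invoke Assumption~\ref{assum: rkhs}: since $g$ is a sample from a Gaussian process and the observations $y^g_1,\dots,y^g_{t-1}$ are corrupted by i.i.d.\ Gaussian noise $\mathcal{N}(0,\sigma^2)$, the posterior distribution of $g(\boldsymbol{x})$ conditioned on the history $\mathcal{D}_{t-1}$ is Gaussian with mean $\mu_{t-1}(\boldsymbol{x})$ and variance $\sigma_{t-1}^2(\boldsymbol{x})$ as given in Eq.~\eqref{eq:gp}. This is the standard Gaussian-conditioning identity and requires no additional argument beyond citing it.

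Next I would standardize. Assuming $\sigma_{t-1}(\boldsymbol{x})>0$, the random variable $Z:=\bigl(g(\boldsymbol{x})-\mu_{t-1}(\boldsymbol{x})\bigr)/\sigma_{t-1}(\boldsymbol{x})$ is distributed as $\mathcal{N}(0,1)$ conditionally on $\mathcal{D}_{t-1}$. Hence
\begin{equation*}
\mathrm{Pr}\bigl(g(\boldsymbol{x}) \ge \mu_{t-1}(\boldsymbol{x}) + \beta_t\sigma_{t-1}(\boldsymbol{x})\bigr)
= \mathrm{Pr}(Z \ge \beta_t) = 1 - \Phi(\beta_t).
\end{equation*}
The assumption $\Phi(\beta_t)\le 1-\alpha$ immediately gives $1-\Phi(\beta_t)\ge\alpha$, which is the desired bound. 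The degenerate case $\sigma_{t-1}(\boldsymbol{x})=0$ is handled separately: there the posterior collapses to a point mass at $\mu_{t-1}(\boldsymbol{x})$, so the event $\{g(\boldsymbol{x})\ge\mu_{t-1}(\boldsymbol{x})\}$ has probability one, and any $\alpha\in[0,1]$ is satisfied.

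There is no real obstacle here; the proposition is essentially a rewriting of the normal-tail identity once one accepts the Gaussian posterior of $g(\boldsymbol{x})$. The only subtlety worth flagging is the scope of the probability: the statement is a \emph{pointwise} (and marginal-in-$\boldsymbol{x}$) posterior safety guarantee conditioned on $\mathcal{D}_{t-1}$, not a uniform-in-$\boldsymbol{x}$ high-probability bound on the sample path of $g$, so one does not need the $\beta_t\propto\sqrt{\gamma_t}$ information-gain scaling used in SafeOpt-style analyses. This is why a purely Gaussian-CDF choice of $\beta_t$ suffices for step-wise probabilistic safety.
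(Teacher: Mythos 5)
Your proposal is correct and follows essentially the same route as the paper's proof: condition on the observations so that $g(\boldsymbol{x})$ has the Gaussian posterior $\mathcal{N}(\mu_{t-1}(\boldsymbol{x}),\sigma^2_{t-1}(\boldsymbol{x}))$, standardize, and apply the normal tail identity $\mathrm{Pr}(Z\ge \beta_t)=1-\Phi(\beta_t)\ge\alpha$. Your extra handling of the degenerate case $\sigma_{t-1}(\boldsymbol{x})=0$ and the remark on the pointwise (non-uniform) scope of the guarantee are accurate additions but do not change the argument.
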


Our use of $\beta_t$ here is different from the common-used setting \citep{srinivas2009gaussian}, which would make $u_t(\boldsymbol{x}_t)>g(\boldsymbol{x})$ with a high probability and violate the probabilistic requirement. Our results indicate that the upper confidence bound is permissible when the probability threshold is less than $0.5$, representing the maximum acceptable level of safety violation. Otherwise, the algorithm reverts to a more conservative strategy, utilizing the lower confidence bound to identify safety. We also derive the upper bound of cumulative safety violation $V_T = \sum_{t=1}^T \max(0, -g(\boldsymbol{x}_t))$ of our optimistic strategy, where the proper choice of $\beta_t$ contributes to the reduction of safety violations:

\begin{theorem}\label{thm:vio}
    Let assumptions \ref{assum: rkhs} holds for safety function $g$. Define $\beta^c_T \coloneqq 2log(|\mathcal{X}|T^2\pi^2/6\delta)$ and $C_1 \coloneqq 8/\log(1+\sigma^{-2})$. With probability at least $1-\delta$, the sample points of Algorithm \ref{alg:overall} at time step $T$ satisfy
    \begin{align}
       \mathbb{E}[V_T] \le (1-\alpha)\sqrt{C_1T\beta^c_T\gamma_T}, 
    \end{align}

       where $\gamma_T$ is the maximum information gain for $g$ over $\mathcal{X}$.
\end{theorem}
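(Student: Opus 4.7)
The plan is a regret-style analysis combining three ingredients: (i) the Srinivas-type GP concentration inequality that motivates the choice $\beta^c_T = 2\log(|\mathcal{X}|T^2\pi^2/6\delta)$, (ii) the per-step probabilistic safety guarantee from Proposition~\ref{prop:safe}, and (iii) the information-gain lemma used to pay for sums of posterior standard deviations. First I would invoke a union bound over $\boldsymbol{x}\in\mathcal{X}$ and $t\ge 1$ (with allocation $\delta\cdot 6/(t^2\pi^2)$ in $t$ and $1/|\mathcal{X}|$ in space) to establish the event $E$ on which $|g(\boldsymbol{x})-\mu_{t-1}(\boldsymbol{x})|\le \sqrt{\beta^c_t}\,\sigma_{t-1}(\boldsymbol{x})$ holds for every $\boldsymbol{x}$ and every $t$; this is exactly what the stated value of $\beta^c_T$ buys, with total failure probability at most $\delta$. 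All subsequent reasoning is conditioned on $E$.

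Next I would derive a deterministic per-step bound on $v_t = \max(0,-g(\boldsymbol{x}_t))$. Since $\boldsymbol{x}_t \in S_t$ in Algorithm~\ref{alg:overall}, the safe-set definition gives $\mu_{t-1}(\boldsymbol{x}_t) \ge -\beta_t\sigma_{t-1}(\boldsymbol{x}_t)$. Combined with concentration on $E$, this yields $g(\boldsymbol{x}_t) \ge -(\beta_t+\sqrt{\beta^c_T})\sigma_{t-1}(\boldsymbol{x}_t)$. In the relevant regime $\alpha\ge 1/2$, Proposition~\ref{prop:safe} forces $\beta_t\le 0$, so this simplifies to the pointwise bound $v_t\le\sqrt{\beta^c_T}\,\sigma_{t-1}(\boldsymbol{x}_t)$ on $E$; for $\alpha<1/2$ a mild absorption of $\beta_t$ into the constant handles the general case. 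To manufacture the $(1-\alpha)$ prefactor, I would now upper bound $v_t \le \sqrt{\beta^c_T}\,\sigma_{t-1}(\boldsymbol{x}_t)\cdot\mathbb{1}[g(\boldsymbol{x}_t)<0]$ and take expectation against the posterior at $\boldsymbol{x}_t$. The magnitude prefactor is deterministic given $\mathcal{F}_{t-1}$, while the posterior probability of the violation event is at most $1-\alpha$ by Proposition~\ref{prop:safe}, so the conditional expectation factors exactly to $\mathbb{E}[v_t\mid\mathcal{F}_{t-1}]\le (1-\alpha)\sqrt{\beta^c_T}\,\sigma_{t-1}(\boldsymbol{x}_t)$.

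Finally I would sum over $t$, apply Cauchy--Schwarz in the form $\sum_t\sigma_{t-1}(\boldsymbol{x}_t)\le\sqrt{T\sum_t\sigma_{t-1}^2(\boldsymbol{x}_t)}$, and invoke the standard information-gain inequality $\sum_t\sigma_{t-1}^2(\boldsymbol{x}_t)\le\tfrac{C_1}{4}\gamma_T$ (Srinivas et al., Lemma~5.4, with $C_1 = 8/\log(1+\sigma^{-2})$) to obtain $\mathbb{E}[V_T]\le(1-\alpha)\sqrt{C_1 T\beta^c_T\gamma_T}$ up to a universal constant. The main obstacle will be the decoupling step in the middle paragraph: one must avoid a $\sqrt{1-\alpha}$ factor that would arise from a naive Cauchy--Schwarz pairing of the magnitude $|g-\mu|$ with the indicator $\mathbb{1}[g<0]$. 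The key observation that makes the bound linear in $1-\alpha$ is that on $E$ the magnitude is already a deterministic function of $\boldsymbol{x}_t$, so the conditional expectation of the product separates as magnitude times posterior violation probability, each controlled individually by concentration and by Proposition~\ref{prop:safe}.
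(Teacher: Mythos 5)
Your proposal matches the paper's proof in all essential respects: the same Srinivas-type confidence event bounds the magnitude of any violation by $O\bigl(\sqrt{\beta^c_T}\,\sigma_{t-1}(\boldsymbol{x}_t)\bigr)$, the same appeal to Proposition~\ref{prop:safe} caps the posterior violation probability at $1-\alpha$ so that the conditional expectation factors into magnitude times probability, and the same Cauchy--Schwarz plus maximum-information-gain argument converts $\sum_t \sigma_{t-1}(\boldsymbol{x}_t)$ into $\sqrt{C_1 T \gamma_T}$. The paper reaches the decoupling step via the subadditivity identity $[-g]^{+}\le[u_t-g]^{+}+[-u_t]^{+}$ with $[-u_t]^{+}=0$ on the selected safe set, rather than your indicator $\mathbb{1}[g(\boldsymbol{x}_t)<0]$, but this is a cosmetic difference and both arguments share the same (mild, acknowledged) subtlety of combining the high-probability concentration event with the unconditional posterior probability.
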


\subsection{Local search via trust region}

In addition to optimistic safety identification, a trust region method is employed to dynamically pinpoint local search regions, which has demonstrated impressive empirical performance over high-dimensional problems \citep{eriksson2019scalable, wang2020learning, eriksson2021scalable}. At each optimization round, a local search space is defined as a hypercube trust region around the current best safe point in the sample dataset. We design a safety sensitive strategy to update the trust region state, as illustrated in Algorithm \ref{alg:update}.  
Specifically, a sampling round is considered ``successful'' if it finds a better reward while maintaining comprehensive safety (line 1-2). Conversely, it is labeled a ``failure'' if any unsafe points are found or if there is no discernible improvement (line 3-5). The side length is adjusted—increased for successes and decreased for failures—upon reaching a preset threshold (line 6-9). Unlike conventional local BO methods that discard all data and restart when the side length reaches its minimum, our approach resets $l_t$ to its initial length and retains all previous samples, ensuring a different, safer trajectory sampling than the initial instance (line 10-12). 

We also provide the theoretical implications of our local search strategy in reducing the safety violations. The maximum information gain $\gamma_T$ is positively correlated with the size of search space $\mathcal{X}$. During the optimization, the actual search region of \algo~is restricted by the adaptive trust region, resulting in a lower maximum information gain compared to entire input space.  Therefore the safety violation bound in Theorem \ref{thm:vio} can be further reduced compared to global search.

\subsection{Safe optimization with isometric embedding}

In addition to identity mapping, when applying
safe optimization over the subspace of other embeddings, a pertinent question arises regarding whether safety guarantees persist in the original space. Here we demonstrate that, through the use of isometric embedding, the probabilistic safety guarantee in the embeded subspace can still be fulfilled in the original space when using stationary kernels. 

\begin{proposition}\label{prop:iso} 
Define $d_{\mathcal{X}}(\boldsymbol{x}, \boldsymbol{x}') = |\boldsymbol{x}-\boldsymbol{x}'|, d_{\mathcal{Z}}(\boldsymbol{z}, \boldsymbol{z}') = |\boldsymbol{z}-\boldsymbol{z}'|$, and $u_t^{\mathcal{Z}} (\boldsymbol{z}) \coloneqq \mu_{t-1} (\boldsymbol{z}) + \beta_t \sigma_{t-1} (\boldsymbol{z})$ is the upper confidence bound estimated from GP over the embeded subspace $\mathcal{Z}$. Suppose the GP kernel is stationary and $d_{\mathcal{Z}}(\boldsymbol{z}, \boldsymbol{z}') = d_{\mathcal{X}}(\Pi^{-1}(\boldsymbol{z}), \Pi^{-1}(\boldsymbol{z}'))$. At every time step $t$, if the point $\boldsymbol{z}$ satisfies that $\text{Pr}(u_t^{\mathcal{Z}} (\boldsymbol{z}) \ge 0) \ge \alpha$, then we have $\text{Pr}(g(\Pi^{-1}(\boldsymbol{z})) \ge 0) \ge \alpha$
\end{proposition}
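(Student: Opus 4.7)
The plan is to exploit the fact that a stationary kernel depends only on pairwise distances, so an isometric reparameterization produces a Gaussian process on $\mathcal{Z}$ whose posterior statistics at $\boldsymbol{z}$ agree exactly with those of a GP on $\mathcal{X}$ evaluated at $\Pi^{-1}(\boldsymbol{z})$. Once this correspondence is established, Proposition~\ref{prop:safe} can be invoked directly in the original space to translate the probabilistic lower bound on $u_t^{\mathcal{Z}}(\boldsymbol{z})$ into a probabilistic safety statement about $g(\Pi^{-1}(\boldsymbol{z}))$.

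First I would unpack stationarity: the kernel can be written $k(\boldsymbol{u},\boldsymbol{v}) = \kappa(d(\boldsymbol{u},\boldsymbol{v}))$ for some function $\kappa$ of the metric. Next I would apply the isometry hypothesis $d_{\mathcal{Z}}(\boldsymbol{z},\boldsymbol{z}') = d_{\mathcal{X}}(\Pi^{-1}(\boldsymbol{z}),\Pi^{-1}(\boldsymbol{z}'))$ pointwise to conclude $k(\boldsymbol{z},\boldsymbol{z}') = k(\Pi^{-1}(\boldsymbol{z}), \Pi^{-1}(\boldsymbol{z}'))$ for every pair. Since the training dataset $\mathcal{D}^{\mathcal{Z}}_{t-1}$ is obtained from $\mathcal{D}^{\mathcal{X}}_{t-1}$ by applying $\Pi$ to inputs while keeping the observed targets identical, the kernel matrix $\boldsymbol{K}_t$ and cross-covariance $\boldsymbol{k}_t(\cdot)$ used in Equation~\eqref{eq:gp} are identical in the two spaces. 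Substituting into the posterior formulas yields $\mu_{t-1}^{\mathcal{Z}}(\boldsymbol{z}) = \mu_{t-1}^{\mathcal{X}}(\Pi^{-1}(\boldsymbol{z}))$ and $\sigma_{t-1}^{\mathcal{Z}}(\boldsymbol{z}) = \sigma_{t-1}^{\mathcal{X}}(\Pi^{-1}(\boldsymbol{z}))$, hence $u_t^{\mathcal{Z}}(\boldsymbol{z}) = u_t^{\mathcal{X}}(\Pi^{-1}(\boldsymbol{z}))$.

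With the two UCBs identified, the rest is immediate. The hypothesis $\Pr(u_t^{\mathcal{Z}}(\boldsymbol{z}) \ge 0) \ge \alpha$ becomes $\Pr(u_t^{\mathcal{X}}(\Pi^{-1}(\boldsymbol{z})) \ge 0) \ge \alpha$, which by Proposition~\ref{prop:safe} (applied to the point $\boldsymbol{x} = \Pi^{-1}(\boldsymbol{z})$ with $\beta_t$ chosen so that $\Phi(\beta_t) \le 1-\alpha$) implies $\Pr(g(\Pi^{-1}(\boldsymbol{z})) \ge 0) \ge \alpha$. I would close with a brief remark on why stationarity is essential: a non-stationary kernel would depend on the absolute positions of inputs, so equality of distances alone would not suffice to equate the two posteriors, and the argument would break.

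The main obstacle I anticipate is being explicit about the observation model under the embedding, i.e. ensuring that the noise-perturbed targets $y^g_t$ attached to $\Pi(\boldsymbol{x}_t) \in \mathcal{Z}$ really correspond to a GP on $\mathcal{Z}$ with the stated kernel so that Assumption~\ref{assum: rkhs} transfers verbatim to the latent space. This is a consistency check rather than a technical difficulty: once stationarity plus isometry gives identical Gram matrices and $\Pi$ is a bijection onto its image (so $\Pi^{-1}$ is well-defined on evaluation points of interest), the pushforward of the GP on $\mathcal{X}$ along $\Pi$ is a GP on $\mathcal{Z}$ with kernel $k \circ (\Pi^{-1}\times\Pi^{-1})$ that coincides with the kernel used by the algorithm, and the probabilistic safety statement carries through unchanged.
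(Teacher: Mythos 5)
Your proposal is correct and follows essentially the same route as the paper's own proof: stationarity plus the isometry hypothesis gives identical kernel evaluations, hence identical GP posteriors and upper confidence bounds in $\mathcal{X}$ and $\mathcal{Z}$, after which the probabilistic guarantee of Proposition~\ref{prop:safe} transfers to $g(\Pi^{-1}(\boldsymbol{z}))$. Your version is somewhat more explicit than the paper's (spelling out the equality of Gram matrices and the observation-model consistency check), but the argument is the same.
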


The choice of the isometry embedding can be some linear mapppings, such as principal component analysis (PCA) when the number of principal components exceeds the effective dimension of the problem. Recent research efforts have also explored the utilization of deep neural networks to learn a subspace with both good representation and approximated distance-preserving property. We will show in experiment that \algo~is still able to maintain high safe probability when using approximated isometric embeddings.

\section{Experiments}

In this section, we first utilize synthetic functions to evaluate the algorithm performance. Then we apply \algo~to safely optimize the control of high-dimensional musculoskeletal system. Finally we demonstrate the potential of \algo~in optimizing neural stimulation induced human motion control through both simulation and real experiments. Additional experiments are presented in Appendix \ref{sec:add_exp}.

\textbf{Evaluation metrics.} We assess the performance of the algorithm according to three metrics: the best feasible objective function value (Objective), the safe decision ratio of 
all samples (Safety), and the cumulative safety violation (Violation). 
The presented plots and tables display the means along with one standard error.

\textbf{Isometric embedding.} We utilize PCA in synthetic function and musculoskeletal system control tasks. In the neural stimulation induced human motion control task, we explore the use of isometric regularized variational autoencoder (IRVAE \citep{yonghyeon2021regularized}) as the dimension reduction component.

\subsection{Synthetic Function Optimization}

To evaluate the algorithm performance under full assumption satisfaction, we sample both objective and safety functions from Gaussian processes, with an effective dimension $d_e=40$, which is much lower than the input dimension $D=1000$. We contrast the optimization performance with the following competitive algorithms target for high-dimensional constrained optimization: LineBO \citep{kirschner2019adaptive}, SCBO \citep{eriksson2021scalable}, CONFIG \citep{xu2023constrained},  cEI \citep{schonlau1998global} and CMA-ES \citep{hansen2006cma}. We attempted to run SafeOpt \citep{sui2015safe}, but it failed to expand the safe region from the initial points. 
We set PCA subspace dimension as $d=50$. We show the optimization result in Table \ref{tab:syn}. \algo~achieves significantly better optimization performance, higher safety decision ratio, and lower cumulative safety violation compared against all baselines. 

\begin{table}[tbh]
\caption{Algorithm performance on GP synthetic functions. The total sample
budget is 500 including 200 initial points. We show the averaged performance over 100 independent runs.}
    \label{tab:syn}
\vspace{10pt}
\resizebox{\textwidth}{!}{
\begin{tabular}{cccccccccccc}
        \toprule 
        Metric & \algo & LineBO & SCBO & CONFIG & cEI & CMAES \\
        \midrule
        Objective ($\uparrow$)& $\boldsymbol{3.96\pm0.15}$ & $3.07\pm0.04$ & $2.97\pm0.06$ & $2.91\pm0.05$ & $2.9\pm0.03$ & $2.7\pm0.04$  \\
        Safety ($\uparrow$) & $\boldsymbol{0.81\pm0.02}$ & $0.78\pm0.0$ & $0.77\pm0.0$ & $0.77\pm0.0$ & $0.77\pm0.0$ & $0.78\pm0.01$  \\
        Violation ($\downarrow$) & $\boldsymbol{27.42\pm4.01 }$ & $36.59\pm0.82$ & $39.05\pm0.92$ & $38.96\pm0.96$ & $39.61\pm0.58$ & $38.65\pm1.89$ \\
        \bottomrule 
        
    \end{tabular}
  }
  \vspace{5pt}

\end{table}

\begin{figure}[htbp]
  \centering
    \includegraphics[width=1.0\textwidth]{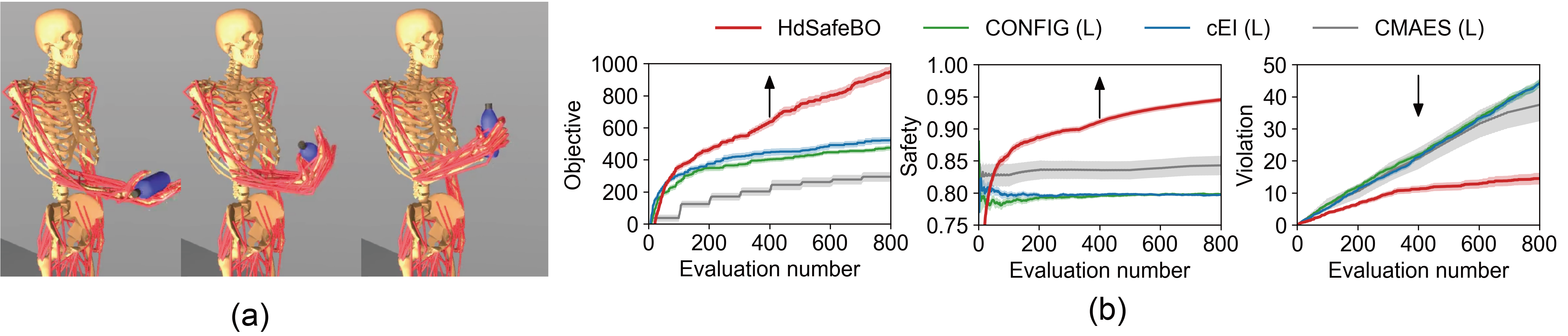}
  \caption{\textbf{Optimization for the control of a musculoskeletal system.}  (a) Task illustration. (b) Optimization performance averaged over 50 independent runs. Arrows indicate the better direction.
  }
 \label{fig:muscle}
\end{figure}

\subsection{Optimization for the Control of a Musculoskeletal System}

We establish an upper limb control task utilizing a musculoskeletal system \cite{zuo2023self}. The objective is to optimize the activities of 55 hand-related muscles to rotate and hold a bottle in vertical position (Figure \ref{fig:muscle} (a)). 
As in \citet{mania2018simple}, we formulate the original reinforcement learning (RL) problem as a sampling problem, where the algorithms need to optimize a linear policy: $\boldsymbol{x} 
\in \mathbb{R}^{|a|\times |o|}$, where $|a|=55$ and $|o|=65$ are the dimensions of the action space (number of muscles) and the observation space, respectively. The policy to be optimized has $D=3575$ parameters, making it a very high-dimensional task. The objective function is set as the accumulated reward from the environment, and the safety function is defined as the landing speed of the bottle (see Appendix \ref{sec:add_mus}). We collected muscle activation and use PCA to build the muscle synergies of performing the task, reducing action dimension from 55 to 5 (3575 to 325 for policy dimension). While the search space is significantly reduced, the remaining optimization problem is still high-dimensional.

We find that \emph{all baselines fail to improve the objective when optimizing over the original parameter space}. Therefore we conducted baseline runs over the subspace derived from PCA (denoted with (L)), where dimension reduction facilitated effective optimization.
The optimization results are presented in Figure \ref{fig:muscle} (b). SCBO (L) and LineBO (L) are omitted from the figure as they fail to attain a positive reward.
We observe even within the PCA subspace, all the shown baselines demonstrated lower efficiency and sampled more unsafe parameters compared to \algo.
Our proposed method stands as the pioneering algorithm to achieve efficient and safe optimization over high-dimensional musculoskeletal system control. 
We also conducted an ablation study on the components of \algo~in Appendix \ref{ab:algo}, where both optimistic safe identification and local search were shown to contribute to a safer optimization process.

\begin{figure}[htbp]
  \centering
    \includegraphics[width=1\textwidth]{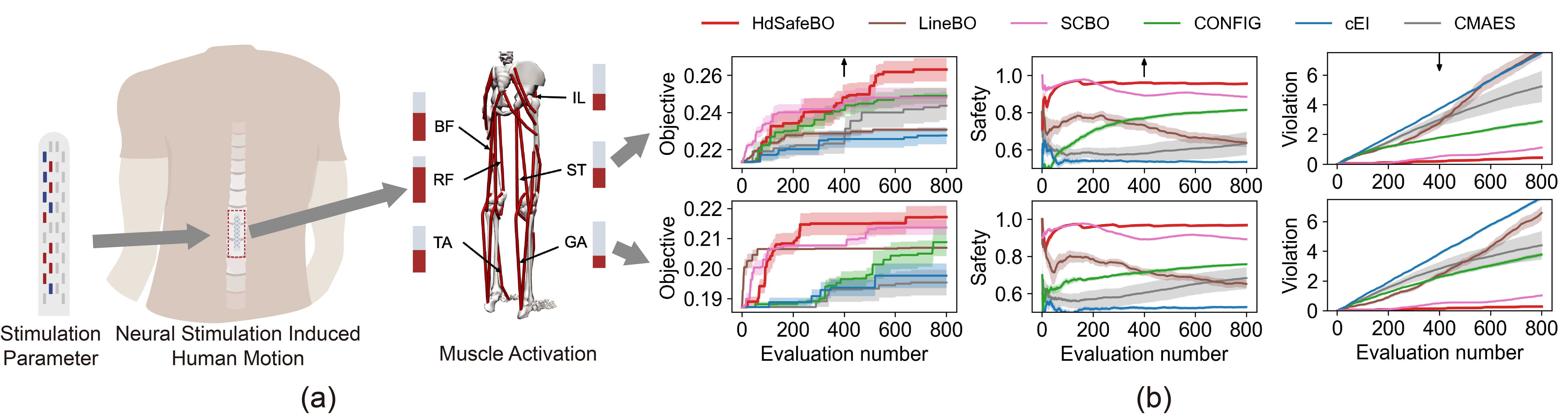}
  \caption{\textbf{Optimization for the control of neural stimulation induced human motion}. (a) Task illustration. IL, RF, TA, BF, ST, GA are different group of target muscle on the lower limb. (b) Optimization performance on the control of neuromuscular model  for semitendinosus (ST), and gastrocnemius (GA), averaged over 10 independent runs. Arrows indicate the better direction.
  }
 \label{fig:scs_ill}
\end{figure}

\subsection{Optimization for the Control of Neural Stimulation Induced Human Motion}

Apart from direct activation of muscle-tendon units, locomotion of musculoskeletal systems can also be governed by muscle synergies arising from spinal cord stimulation, as observed in the central nervous system of vertebrates and certain neuromuscular robotic designs \citep{aydin2019neuromuscular, kurumaya2016musculoskeletal}.
While the neuromuscular system holds the potential to achieve robust control performance with higher energy efficiency compared to direct muscle actuation, the mapping from stimulation inputs to motion becomes less straightforward under complex neural systems. 
In this section, we showcase the success of \algo~in optimizing the control of intricate neural stimulation-actuated human motion through simulation and real experiments.

As shown in Figure \ref{fig:scs_ill} (a), we want to improve the control of lower limb muscles via an electrode array implanted in the human spinal cord. By setting different parameters, the electrical stimulation delivered by the electrode array can induce patient's muscle activities, allowing us to control the lower limb movements of the patient. The stimulation parameter space consists of 16-contact spatial configuration (discrete) and current intensity (continuous). Our optimization goal is the selectivity index of target muscle groups (see Appendix \ref{sec:scs_detail}). A higher selectivity index indicates better control over the target muscle group and less influence over non-target muscles.

To enhance the physical interpretability of discrete input representation, we transformed 17-dimensional vectors into an electric field map with $52\times 14$ pixels using simplified computation. Subsequently, we generate unlabeled synthetic data with clinical priors and train an IRVAE with a 16-dimensional latent space to embed and reconstruct the electric field map, with distance-preserving checks in Appendix \ref{sec:irvae}.

\textbf{Simulation over neuromuscular model}.
In simulation, we use a human neuromuscular model as the function oracle, which is capable of computing the evoked electric field around the spinal cord given certain stimulation parameter, and inferring the lower limb muscle activation (see Appendix \ref{sec:scs_detail}). The maximum induced muscle activation is used as the safety function to avoid causing harm to the patient during optimization. 
 
We choose to optimize the selectivity of emitendinosus (ST) and gastrocnemius (GA) for their importance during human walking. The simulation results are depicted in Figure \ref{fig:scs_ill} (b). Despite the subspace space dimensions being comparable to the original space, \algo~achieves safe exploration by optimizing on the continuous manifold, which obtains the best control performance while maintaining the highest safety selection ratio and lowest cumulative safety violation compared to other algorithms.

\textbf{Real-world experiments on paraplegic patient}.
We further applied \algo~to improve the motor function of a paraplegic patient with the same electrode array implanted. 

Starting with 132 initial configurations from clinical prior, a total of 504 additional trials were conducted with the patient over a 1-month period. 

We observed selectivity improvement of 7 out of 8 target muscles compared to the baseline (left IL: $0.112$, left RF: $0.143$, left TA: $0.097$, left BF: $0.380$, right IL: $0.00$, right RF: $0.266$, right TA: $0.216$, right BF: $0.141$). During the whole experimental procedure, only three configurations recommended by \algo~were rated as unsafe, which evoked large lower limb movements but no physical damage or pain. Our real-world experiment underscores the practicality of \algo~in safely optimize the control of complex neuromuscular systems.

\section{Conclusion}

We develop \algo~for optimizing of the control over high-dimensional embodied systems under safety constraints. Our proposed method employs a local optimistic safe strategy to optimize the objective function and expand the safe region, with probabilistic safety guarantee and cumulative safety violation bound. \algo~is able to optimize high-dimensional input ranging from a few dozen to several thousand variables with safety guarantee.
The algorithm can efficiently optimize the control of high-dimensional human musculoskeletal systems with high safety probability, and successfully optimize human motion control via neural stimulation in real clinical experiments. \algo~has great potential to safely optimize the control of real-world high-dimensional embodied systems online.

\textbf{Limitations}. While we provide the probabilistic safety guarantee for \algo, real-world applications may fail to fully satisfy the theoretical assumptions regarding function regularity. 
The imperfection of the trained embedding could lead to unsafe behavior when optimizing over a reduced subspace. It is important to pre-check the distance-preserving quality of this subspace before conducting online optimization, and improvements can be made by synthesizing additional unsupervised data based on domain knowledge. The parameter space in our real human experiment has been restricted by domain prior. Directly applying \algo~to a completely unexplored problem may cause more unsafe decisions due to its optimism to safety.
The total evaluation number of \algo~is constrained by cubic complexity of Gaussian process. The computational complexity prevents the use of \algo~in very long time horizon optimization problems.



\clearpage
\acknowledgments{This project is funded by STI 2030-Major Projects 2022ZD0209400 and Tsinghua DuShi fund.}


\bibliography{example}  

\begin{thebibliography}{82}
\providecommand{\natexlab}[1]{#1}
\providecommand{\url}[1]{\texttt{#1}}
\expandafter\ifx\csname urlstyle\endcsname\relax
  \providecommand{\doi}[1]{doi: #1}\else
  \providecommand{\doi}{doi: \begingroup \urlstyle{rm}\Url}\fi

\bibitem[Berkenkamp et~al.(2016)Berkenkamp, Schoellig, and Krause]{berkenkamp2016safe}
F.~Berkenkamp, A.~P. Schoellig, and A.~Krause.
\newblock Safe controller optimization for quadrotors with gaussian processes.
\newblock In \emph{2016 IEEE international conference on robotics and automation (ICRA)}, pages 491--496. IEEE, 2016.

\bibitem[Sukhija et~al.(2022)Sukhija, Turchetta, Lindner, Krause, Trimpe, and Baumann]{sukhija2022scalable}
B.~Sukhija, M.~Turchetta, D.~Lindner, A.~Krause, S.~Trimpe, and D.~Baumann.
\newblock Scalable safe exploration for global optimization of dynamical systems.
\newblock \emph{arXiv preprint arXiv:2201.09562}, 2022.

\bibitem[Widmer et~al.(2023)Widmer, Kang, Sukhija, H{\"u}botter, Krause, and Coros]{widmer2023tuning}
D.~Widmer, D.~Kang, B.~Sukhija, J.~H{\"u}botter, A.~Krause, and S.~Coros.
\newblock Tuning legged locomotion controllers via safe bayesian optimization.
\newblock In \emph{Conference on Robot Learning}, pages 2444--2464. PMLR, 2023.

\bibitem[Sui et~al.(2015)Sui, Gotovos, Burdick, and Krause]{sui2015safe}
Y.~Sui, A.~Gotovos, J.~Burdick, and A.~Krause.
\newblock Safe exploration for optimization with gaussian processes.
\newblock In \emph{International conference on machine learning}, pages 997--1005. PMLR, 2015.

\bibitem[Griffiths and Hern{\'a}ndez-Lobato(2020)]{griffiths2020constrained}
R.-R. Griffiths and J.~M. Hern{\'a}ndez-Lobato.
\newblock Constrained bayesian optimization for automatic chemical design using variational autoencoders.
\newblock \emph{Chemical science}, 11\penalty0 (2):\penalty0 577--586, 2020.

\bibitem[Notin et~al.(2021)Notin, Hern{\'a}ndez-Lobato, and Gal]{notin2021improving}
P.~Notin, J.~M. Hern{\'a}ndez-Lobato, and Y.~Gal.
\newblock Improving black-box optimization in vae latent space using decoder uncertainty.
\newblock \emph{Advances in Neural Information Processing Systems}, 34:\penalty0 802--814, 2021.

\bibitem[Eriksson and Poloczek(2021)]{eriksson2021scalable}
D.~Eriksson and M.~Poloczek.
\newblock Scalable constrained bayesian optimization.
\newblock In \emph{International Conference on Artificial Intelligence and Statistics}, pages 730--738. PMLR, 2021.

\bibitem[Sui et~al.(2018)Sui, Zhuang, Burdick, and Yue]{sui2018stagewise}
Y.~Sui, V.~Zhuang, J.~Burdick, and Y.~Yue.
\newblock Stagewise safe bayesian optimization with gaussian processes.
\newblock In \emph{International conference on machine learning}, pages 4781--4789. PMLR, 2018.

\bibitem[Turchetta et~al.(2019)Turchetta, Berkenkamp, and Krause]{turchetta2019safe}
M.~Turchetta, F.~Berkenkamp, and A.~Krause.
\newblock Safe exploration for interactive machine learning.
\newblock \emph{Advances in Neural Information Processing Systems}, 32, 2019.

\bibitem[Baumann et~al.(2021)Baumann, Marco, Turchetta, and Trimpe]{baumann2021gosafe}
D.~Baumann, A.~Marco, M.~Turchetta, and S.~Trimpe.
\newblock Gosafe: Globally optimal safe robot learning.
\newblock In \emph{2021 IEEE International Conference on Robotics and Automation (ICRA)}, pages 4452--4458. IEEE, 2021.

\bibitem[Berkenkamp et~al.(2021)Berkenkamp, Krause, and Schoellig]{berkenkamp2021bayesian}
F.~Berkenkamp, A.~Krause, and A.~P. Schoellig.
\newblock Bayesian optimization with safety constraints: safe and automatic parameter tuning in robotics.
\newblock \emph{Machine Learning}, pages 1--35, 2021.

\bibitem[Zhou and Ji(2022)]{zhou2022kernelized}
X.~Zhou and B.~Ji.
\newblock On kernelized multi-armed bandits with constraints.
\newblock \emph{Advances in Neural Information Processing Systems}, 35:\penalty0 14--26, 2022.

\bibitem[Lu and Paulson(2022)]{lu2022no}
C.~Lu and J.~A. Paulson.
\newblock No-regret bayesian optimization with unknown equality and inequality constraints using exact penalty functions.
\newblock \emph{IFAC-PapersOnLine}, 55\penalty0 (7):\penalty0 895--902, 2022.

\bibitem[Guo et~al.(2023)Guo, Qi, and Liu]{guo2023rectified}
H.~Guo, Z.~Qi, and X.~Liu.
\newblock Rectified pessimistic-optimistic learning for stochastic continuum-armed bandit with constraints.
\newblock In \emph{Learning for Dynamics and Control Conference}, pages 1333--1344. PMLR, 2023.

\bibitem[Xu et~al.(2023)Xu, Jiang, Svetozarevic, and Jones]{xu2023constrained}
W.~Xu, Y.~Jiang, B.~Svetozarevic, and C.~Jones.
\newblock Constrained efficient global optimization of expensive black-box functions.
\newblock In \emph{International Conference on Machine Learning}, pages 38485--38498. PMLR, 2023.

\bibitem[Gelbart et~al.(2014)Gelbart, Snoek, and Adams]{gelbart2014bayesian}
M.~A. Gelbart, J.~Snoek, and R.~P. Adams.
\newblock Bayesian optimization with unknown constraints.
\newblock \emph{arXiv preprint arXiv:1403.5607}, 2014.

\bibitem[Hern{\'a}ndez-Lobato et~al.(2016)Hern{\'a}ndez-Lobato, Gelbart, Adams, Hoffman, and Ghahramani]{hernandez2016general}
J.~M. Hern{\'a}ndez-Lobato, M.~A. Gelbart, R.~P. Adams, M.~W. Hoffman, and Z.~Ghahramani.
\newblock A general framework for constrained bayesian optimization using information-based search.
\newblock 2016.

\bibitem[Marco et~al.(2021)Marco, Baumann, Khadiv, Hennig, Righetti, and Trimpe]{marco2021robot}
A.~Marco, D.~Baumann, M.~Khadiv, P.~Hennig, L.~Righetti, and S.~Trimpe.
\newblock Robot learning with crash constraints.
\newblock \emph{IEEE Robotics and Automation Letters}, 6\penalty0 (2):\penalty0 1439--1446, 2021.

\bibitem[Gardner et~al.(2014)Gardner, Kusner, Xu, Weinberger, and Cunningham]{gardner2014bayesian}
J.~R. Gardner, M.~J. Kusner, Z.~E. Xu, K.~Q. Weinberger, and J.~P. Cunningham.
\newblock Bayesian optimization with inequality constraints.
\newblock In \emph{ICML}, volume 2014, pages 937--945, 2014.

\bibitem[Marco et~al.(2020)Marco, von Rohr, Baumann, Hern{\'a}ndez-Lobato, and Trimpe]{marco2020excursion}
A.~Marco, A.~von Rohr, D.~Baumann, J.~M. Hern{\'a}ndez-Lobato, and S.~Trimpe.
\newblock Excursion search for constrained bayesian optimization under a limited budget of failures.
\newblock \emph{arXiv preprint arXiv:2005.07443}, 2020.

\bibitem[Schonlau et~al.(1998)Schonlau, Welch, and Jones]{schonlau1998global}
M.~Schonlau, W.~J. Welch, and D.~R. Jones.
\newblock Global versus local search in constrained optimization of computer models.
\newblock \emph{Lecture notes-monograph series}, pages 11--25, 1998.

\bibitem[Frazier(2018)]{frazier2018tutorial}
P.~I. Frazier.
\newblock A tutorial on bayesian optimization.
\newblock \emph{arXiv preprint arXiv:1807.02811}, 2018.

\bibitem[Shahriari et~al.(2015)Shahriari, Swersky, Wang, Adams, and De~Freitas]{shahriari2015taking}
B.~Shahriari, K.~Swersky, Z.~Wang, R.~P. Adams, and N.~De~Freitas.
\newblock Taking the human out of the loop: A review of bayesian optimization.
\newblock \emph{Proceedings of the IEEE}, 104\penalty0 (1):\penalty0 148--175, 2015.

\bibitem[Kirschner et~al.(2019)Kirschner, Mutny, Hiller, Ischebeck, and Krause]{kirschner2019adaptive}
J.~Kirschner, M.~Mutny, N.~Hiller, R.~Ischebeck, and A.~Krause.
\newblock Adaptive and safe bayesian optimization in high dimensions via one-dimensional subspaces.
\newblock In \emph{International Conference on Machine Learning}, pages 3429--3438. PMLR, 2019.

\bibitem[De~Blasi et~al.(2021)De~Blasi, Neifer, and Gepperth]{de2021multi}
S.~De~Blasi, A.~Neifer, and A.~Gepperth.
\newblock Multi-pronged safe bayesian optimization for high dimensions.
\newblock In \emph{2021 IEEE International Conference on Systems, Man, and Cybernetics (SMC)}, pages 1966--1973. IEEE, 2021.

\bibitem[Turner et~al.(2021)Turner, Eriksson, McCourt, Kiili, Laaksonen, Xu, and Guyon]{turner2021bayesian}
R.~Turner, D.~Eriksson, M.~McCourt, J.~Kiili, E.~Laaksonen, Z.~Xu, and I.~Guyon.
\newblock Bayesian optimization is superior to random search for machine learning hyperparameter tuning: Analysis of the black-box optimization challenge 2020.
\newblock In \emph{NeurIPS 2020 Competition and Demonstration Track}, pages 3--26. PMLR, 2021.

\bibitem[Binois and Wycoff(2022)]{binois2022survey}
M.~Binois and N.~Wycoff.
\newblock A survey on high-dimensional gaussian process modeling with application to bayesian optimization.
\newblock \emph{ACM Transactions on Evolutionary Learning and Optimization}, 2\penalty0 (2):\penalty0 1--26, 2022.

\bibitem[Chen et~al.(2012)Chen, Castro, and Krause]{chen2012joint}
B.~Chen, R.~Castro, and A.~Krause.
\newblock Joint optimization and variable selection of high-dimensional gaussian processes.
\newblock \emph{International Conference on Machine Learning}, 2012.

\bibitem[Zhang et~al.(2019)Zhang, Li, and Su]{zhang2019high}
M.~Zhang, H.~Li, and S.~Su.
\newblock High dimensional bayesian optimization via supervised dimension reduction.
\newblock \emph{arXiv preprint arXiv:1907.08953}, 2019.

\bibitem[Song et~al.(2022)Song, Xue, Huang, and Qian]{song2022monte}
L.~Song, K.~Xue, X.~Huang, and C.~Qian.
\newblock Monte carlo tree search based variable selection for high dimensional bayesian optimization.
\newblock \emph{Advances in Neural Information Processing Systems}, 35:\penalty0 28488--28501, 2022.

\bibitem[Wang et~al.(2016)Wang, Hutter, Zoghi, Matheson, and de~Feitas]{wang2016bayesian}
Z.~Wang, F.~Hutter, M.~Zoghi, D.~Matheson, and N.~de~Feitas.
\newblock Bayesian optimization in a billion dimensions via random embeddings.
\newblock \emph{Journal of Artificial Intelligence Research}, 55:\penalty0 361--387, 2016.

\bibitem[Nayebi et~al.(2019)Nayebi, Munteanu, and Poloczek]{nayebi2019framework}
A.~Nayebi, A.~Munteanu, and M.~Poloczek.
\newblock A framework for bayesian optimization in embedded subspaces.
\newblock In \emph{International Conference on Machine Learning}, pages 4752--4761. PMLR, 2019.

\bibitem[Li et~al.(2018)Li, Gupta, Rana, Nguyen, Venkatesh, and Shilton]{li2018high}
C.~Li, S.~Gupta, S.~Rana, V.~Nguyen, S.~Venkatesh, and A.~Shilton.
\newblock High dimensional bayesian optimization using dropout.
\newblock \emph{arXiv preprint arXiv:1802.05400}, 2018.

\bibitem[Letham et~al.(2020)Letham, Calandra, Rai, and Bakshy]{letham2020re}
B.~Letham, R.~Calandra, A.~Rai, and E.~Bakshy.
\newblock Re-examining linear embeddings for high-dimensional bayesian optimization.
\newblock \emph{Advances in neural information processing systems}, 33:\penalty0 1546--1558, 2020.

\bibitem[Papenmeier et~al.(2022)Papenmeier, Nardi, and Poloczek]{papenmeier2022increasing}
L.~Papenmeier, L.~Nardi, and M.~Poloczek.
\newblock Increasing the scope as you learn: Adaptive bayesian optimization in nested subspaces.
\newblock \emph{Advances in Neural Information Processing Systems}, 35:\penalty0 11586--11601, 2022.

\bibitem[G{\'o}mez-Bombarelli et~al.(2018)G{\'o}mez-Bombarelli, Wei, Duvenaud, Hern{\'a}ndez-Lobato, S{\'a}nchez-Lengeling, Sheberla, Aguilera-Iparraguirre, Hirzel, Adams, and Aspuru-Guzik]{gomez2018automatic}
R.~G{\'o}mez-Bombarelli, J.~N. Wei, D.~Duvenaud, J.~M. Hern{\'a}ndez-Lobato, B.~S{\'a}nchez-Lengeling, D.~Sheberla, J.~Aguilera-Iparraguirre, T.~D. Hirzel, R.~P. Adams, and A.~Aspuru-Guzik.
\newblock Automatic chemical design using a data-driven continuous representation of molecules.
\newblock \emph{ACS central science}, 4\penalty0 (2):\penalty0 268--276, 2018.

\bibitem[Moriconi et~al.(2020)Moriconi, Deisenroth, and Sesh~Kumar]{moriconi2020high}
R.~Moriconi, M.~P. Deisenroth, and K.~Sesh~Kumar.
\newblock High-dimensional bayesian optimization using low-dimensional feature spaces.
\newblock \emph{Machine Learning}, 109\penalty0 (9):\penalty0 1925--1943, 2020.

\bibitem[Tripp et~al.(2020)Tripp, Daxberger, and Hern{\'a}ndez-Lobato]{tripp2020sample}
A.~Tripp, E.~Daxberger, and J.~M. Hern{\'a}ndez-Lobato.
\newblock Sample-efficient optimization in the latent space of deep generative models via weighted retraining.
\newblock \emph{Advances in Neural Information Processing Systems}, 33:\penalty0 11259--11272, 2020.

\bibitem[Deshwal and Doppa(2021)]{deshwal2021combining}
A.~Deshwal and J.~Doppa.
\newblock Combining latent space and structured kernels for bayesian optimization over combinatorial spaces.
\newblock \emph{Advances in Neural Information Processing Systems}, 34:\penalty0 8185--8200, 2021.

\bibitem[Grosnit et~al.(2021)Grosnit, Tutunov, Maraval, Griffiths, Cowen-Rivers, Yang, Zhu, Lyu, Chen, Wang, et~al.]{grosnit2021high}
A.~Grosnit, R.~Tutunov, A.~M. Maraval, R.-R. Griffiths, A.~I. Cowen-Rivers, L.~Yang, L.~Zhu, W.~Lyu, Z.~Chen, J.~Wang, et~al.
\newblock High-dimensional bayesian optimisation with variational autoencoders and deep metric learning.
\newblock \emph{arXiv preprint arXiv:2106.03609}, 2021.

\bibitem[Siivola et~al.(2021)Siivola, Paleyes, Gonz{\'a}lez, and Vehtari]{siivola2021good}
E.~Siivola, A.~Paleyes, J.~Gonz{\'a}lez, and A.~Vehtari.
\newblock Good practices for bayesian optimization of high dimensional structured spaces.
\newblock \emph{Applied AI Letters}, 2\penalty0 (2):\penalty0 e24, 2021.

\bibitem[Oh et~al.(2018)Oh, Gavves, and Welling]{oh2018bock}
C.~Oh, E.~Gavves, and M.~Welling.
\newblock Bock: Bayesian optimization with cylindrical kernels.
\newblock In \emph{International Conference on Machine Learning}, pages 3868--3877. PMLR, 2018.

\bibitem[Eriksson et~al.(2019)Eriksson, Pearce, Gardner, Turner, and Poloczek]{eriksson2019scalable}
D.~Eriksson, M.~Pearce, J.~Gardner, R.~D. Turner, and M.~Poloczek.
\newblock Scalable global optimization via local bayesian optimization.
\newblock \emph{Advances in neural information processing systems}, 32, 2019.

\bibitem[M{\"u}ller et~al.(2021)M{\"u}ller, von Rohr, and Trimpe]{muller2021local}
S.~M{\"u}ller, A.~von Rohr, and S.~Trimpe.
\newblock Local policy search with bayesian optimization.
\newblock \emph{Advances in Neural Information Processing Systems}, 34:\penalty0 20708--20720, 2021.

\bibitem[Nguyen et~al.(2022)Nguyen, Wu, Gardner, and Garnett]{nguyen2022local}
Q.~Nguyen, K.~Wu, J.~Gardner, and R.~Garnett.
\newblock Local bayesian optimization via maximizing probability of descent.
\newblock \emph{Advances in Neural Information Processing Systems}, 35:\penalty0 13190--13202, 2022.

\bibitem[Wu et~al.(2023)Wu, Kim, Garnett, and Gardner]{wu2023behavior}
K.~Wu, K.~Kim, R.~Garnett, and J.~R. Gardner.
\newblock The behavior and convergence of local bayesian optimization.
\newblock \emph{arXiv preprint arXiv:2305.15572}, 2023.

\bibitem[Yi et~al.(2024)Yi, Wei, Cheng, He, and Sui]{yi2024improving}
Z.~Yi, Y.~Wei, C.~X. Cheng, K.~He, and Y.~Sui.
\newblock Improving sample efficiency of high dimensional bayesian optimization with mcmc.
\newblock \emph{arXiv preprint arXiv:2401.02650}, 2024.

\bibitem[Maus et~al.(2022)Maus, Jones, Moore, Kusner, Bradshaw, and Gardner]{maus2022local}
N.~Maus, H.~Jones, J.~Moore, M.~J. Kusner, J.~Bradshaw, and J.~Gardner.
\newblock Local latent space bayesian optimization over structured inputs.
\newblock \emph{Advances in Neural Information Processing Systems}, 35:\penalty0 34505--34518, 2022.

\bibitem[Wang et~al.(2020)Wang, Fonseca, and Tian]{wang2020learning}
L.~Wang, R.~Fonseca, and Y.~Tian.
\newblock Learning search space partition for black-box optimization using monte carlo tree search.
\newblock \emph{Advances in Neural Information Processing Systems}, 33:\penalty0 19511--19522, 2020.

\bibitem[Hansen(2006)]{hansen2006cma}
N.~Hansen.
\newblock The cma evolution strategy: a comparing review.
\newblock \emph{Towards a new evolutionary computation: Advances in the estimation of distribution algorithms}, pages 75--102, 2006.

\bibitem[Kramer(2010)]{kramer2010review}
O.~Kramer.
\newblock A review of constraint-handling techniques for evolution strategies.
\newblock \emph{Applied Computational Intelligence and Soft Computing}, 2010:\penalty0 1--19, 2010.

\bibitem[Arnold and Hansen(2012)]{arnold20121+}
D.~V. Arnold and N.~Hansen.
\newblock A (1+ 1)-cma-es for constrained optimisation.
\newblock In \emph{Proceedings of the 14th annual conference on Genetic and evolutionary computation}, pages 297--304, 2012.

\bibitem[Kandasamy et~al.(2018)Kandasamy, Krishnamurthy, Schneider, and P{\'o}czos]{kandasamy2018parallelised}
K.~Kandasamy, A.~Krishnamurthy, J.~Schneider, and B.~P{\'o}czos.
\newblock Parallelised bayesian optimisation via thompson sampling.
\newblock In \emph{International Conference on Artificial Intelligence and Statistics}, pages 133--142. PMLR, 2018.

\bibitem[Srinivas et~al.(2009)Srinivas, Krause, Kakade, and Seeger]{srinivas2009gaussian}
N.~Srinivas, A.~Krause, S.~M. Kakade, and M.~Seeger.
\newblock Gaussian process optimization in the bandit setting: No regret and experimental design.
\newblock \emph{arXiv preprint arXiv:0912.3995}, 2009.

\bibitem[Yonghyeon et~al.(2021)Yonghyeon, Yoon, Son, and Park]{yonghyeon2021regularized}
L.~Yonghyeon, S.~Yoon, M.~Son, and F.~C. Park.
\newblock Regularized autoencoders for isometric representation learning.
\newblock In \emph{International Conference on Learning Representations}, 2021.

\bibitem[Zuo et~al.(2023)Zuo, He, Shao, and Sui]{zuo2023self}
C.~Zuo, K.~He, J.~Shao, and Y.~Sui.
\newblock Self model for embodied intelligence: Modeling full-body human musculoskeletal system and locomotion control with hierarchical low-dimensional representation.
\newblock \emph{arXiv preprint arXiv:2312.05473}, 2023.

\bibitem[Mania et~al.(2018)Mania, Guy, and Recht]{mania2018simple}
H.~Mania, A.~Guy, and B.~Recht.
\newblock Simple random search provides a competitive approach to reinforcement learning.
\newblock \emph{arXiv preprint arXiv:1803.07055}, 2018.

\bibitem[Aydin et~al.(2019)Aydin, Zhang, Nuethong, Pagan-Diaz, Bashir, Gazzola, and Saif]{aydin2019neuromuscular}
O.~Aydin, X.~Zhang, S.~Nuethong, G.~J. Pagan-Diaz, R.~Bashir, M.~Gazzola, and M.~T.~A. Saif.
\newblock Neuromuscular actuation of biohybrid motile bots.
\newblock \emph{Proceedings of the National Academy of Sciences}, 116\penalty0 (40):\penalty0 19841--19847, 2019.

\bibitem[Kurumaya et~al.(2016)Kurumaya, Suzumori, Nabae, and Wakimoto]{kurumaya2016musculoskeletal}
S.~Kurumaya, K.~Suzumori, H.~Nabae, and S.~Wakimoto.
\newblock Musculoskeletal lower-limb robot driven by multifilament muscles.
\newblock \emph{Robomech Journal}, 3:\penalty0 1--15, 2016.

\bibitem[Chowdhury and Gopalan(2017)]{chowdhury2017kernelized}
S.~R. Chowdhury and A.~Gopalan.
\newblock On kernelized multi-armed bandits.
\newblock In \emph{International Conference on Machine Learning}, pages 844--853. PMLR, 2017.

\bibitem[Kingma and Ba(2014)]{kingma2014adam}
D.~P. Kingma and J.~Ba.
\newblock Adam: A method for stochastic optimization.
\newblock \emph{arXiv preprint arXiv:1412.6980}, 2014.

\bibitem[Balandat et~al.(2020)Balandat, Karrer, Jiang, Daulton, Letham, Wilson, and Bakshy]{balandat2020botorch}
M.~Balandat, B.~Karrer, D.~Jiang, S.~Daulton, B.~Letham, A.~G. Wilson, and E.~Bakshy.
\newblock Botorch: A framework for efficient monte-carlo bayesian optimization.
\newblock \emph{Advances in neural information processing systems}, 33:\penalty0 21524--21538, 2020.

\bibitem[Todorov et~al.(2012)Todorov, Erez, and Tassa]{todorov2012mujoco}
E.~Todorov, T.~Erez, and Y.~Tassa.
\newblock Mujoco: A physics engine for model-based control.
\newblock In \emph{2012 IEEE/RSJ international conference on intelligent robots and systems}, pages 5026--5033. IEEE, 2012.

\bibitem[Haarnoja et~al.(2018)Haarnoja, Zhou, Abbeel, and Levine]{haarnoja2018soft}
T.~Haarnoja, A.~Zhou, P.~Abbeel, and S.~Levine.
\newblock Soft actor-critic: Off-policy maximum entropy deep reinforcement learning with a stochastic actor.
\newblock In \emph{International conference on machine learning}, pages 1861--1870. PMLR, 2018.

\bibitem[Thomson(1894)]{1894Fifth}
A.~Thomson.
\newblock Fifth annual report of the committee of collective investigation of the anatomical society of great britain and ireland for the year 1893-94.
\newblock \emph{Journal of Anatomy}, 29\penalty0 (Pt 1):\penalty0 35--60, 1894.

\bibitem[Mccotter(1916)]{1916Regarding}
R.~E. Mccotter.
\newblock Regarding the length and extent of the human medulla spinalis.
\newblock \emph{The Anatomical Record}, 10\penalty0 (9):\penalty0 559–564, 1916.

\bibitem[Zhou et~al.(2010)Zhou, Wang, Huang, Zhu, Chen, and Zhou]{2010Microsurgical}
M.~W. Zhou, W.~T. Wang, H.~S. Huang, G.~Y. Zhu, Y.~P. Chen, and C.~M. Zhou.
\newblock Microsurgical anatomy of lumbosacral nerve rootlets for highly selective rhizotomy in chronic spinal cord injury.
\newblock \emph{The Anatomical Record: Advances in Integrative Anatomy and Evolutionary Biology}, 293, 2010.

\bibitem[Rowald et~al.(2022)Rowald, Komi, Demesmaeker, Baaklini, Hernandez-Charpak, Paoles, Montanaro, Cassara, Becce, Lloyd, et~al.]{rowald2022activity}
A.~Rowald, S.~Komi, R.~Demesmaeker, E.~Baaklini, S.~D. Hernandez-Charpak, E.~Paoles, H.~Montanaro, A.~Cassara, F.~Becce, B.~Lloyd, et~al.
\newblock Activity-dependent spinal cord neuromodulation rapidly restores trunk and leg motor functions after complete paralysis.
\newblock \emph{Nature medicine}, 28\penalty0 (2):\penalty0 260--271, 2022.

\bibitem[Kameyama et~al.(1996)Kameyama, Hashizume, and Sobue]{1996Morphologic}
T.~Kameyama, Y.~Hashizume, and G.~Sobue.
\newblock Morphologic features of the normal human cadaveric spinal cord.
\newblock \emph{Spine}, 21\penalty0 (11):\penalty0 1285, 1996.

\bibitem[Ladenbauer et~al.(2010)Ladenbauer, Minassian, Hofstoetter, Dimitrijevic, and Rattay]{2010Stimulation}
J.~Ladenbauer, K.~Minassian, U.~S. Hofstoetter, M.~R. Dimitrijevic, and F.~Rattay.
\newblock Stimulation of the human lumbar spinal cord with implanted and surface electrodes: A computer simulation study.
\newblock \emph{IEEE Transactions on Neural Systems \& Rehabilitation Engineering A Publication of the IEEE Engineering in Medicine \& Biology Society}, 18\penalty0 (6):\penalty0 637, 2010.

\bibitem[Sharrard(1964)]{1964THE}
W.~Sharrard.
\newblock The segmental innervation of the lower limb muscles in man.
\newblock \emph{Annals of the Royal College of Surgeons of England}, 35\penalty0 (2):\penalty0 106, 1964.

\bibitem[Schirmer et~al.(2011)Schirmer, Shils, Arle, Cosgrove, and Magge]{2011Heuristic}
C.~M. Schirmer, J.~L. Shils, J.~E. Arle, G.~R. Cosgrove, and S.~Magge.
\newblock Heuristic map of myotomal innervation in humans using direct intraoperative nerve root stimulation.
\newblock \emph{J Neurosurg Spine}, 15\penalty0 (1):\penalty0 64--70, 2011.

\bibitem[Rattay and Frank(1986)]{Rattay1986Analysis}
Rattay and Frank.
\newblock Analysis of models for external stimulation of axons.
\newblock \emph{IEEE transactions on bio-medical engineering}, 33\penalty0 (10):\penalty0 974, 1986.

\bibitem[Butson and McIntyre(2006)]{Butson2006}
C.~R. Butson and C.~C. McIntyre.
\newblock Role of electrode design on the volume of tissue activated during deep brain stimulation.
\newblock \emph{Journal of Neural Engineering}, 3:\penalty0 1--8, 3 2006.
\newblock ISSN 1741-2560.
\newblock \doi{10.1088/1741-2560/3/1/001}.

\bibitem[Seeger et~al.(2003)Seeger, Williams, and Lawrence]{seeger2003fast}
M.~W. Seeger, C.~K. Williams, and N.~D. Lawrence.
\newblock Fast forward selection to speed up sparse gaussian process regression.
\newblock In \emph{International Workshop on Artificial Intelligence and Statistics}, pages 254--261. PMLR, 2003.

\bibitem[Snelson and Ghahramani(2005)]{snelson2005sparse}
E.~Snelson and Z.~Ghahramani.
\newblock Sparse gaussian processes using pseudo-inputs.
\newblock \emph{Advances in neural information processing systems}, 18, 2005.

\bibitem[Hensman et~al.(2013)Hensman, Fusi, and Lawrence]{hensman2013gaussian}
J.~Hensman, N.~Fusi, and N.~D. Lawrence.
\newblock Gaussian processes for big data.
\newblock \emph{arXiv preprint arXiv:1309.6835}, 2013.

\bibitem[Kandasamy et~al.(2015)Kandasamy, Schneider, and P{\'o}czos]{kandasamy2015high}
K.~Kandasamy, J.~Schneider, and B.~P{\'o}czos.
\newblock High dimensional bayesian optimisation and bandits via additive models.
\newblock In \emph{International conference on machine learning}, pages 295--304. PMLR, 2015.

\bibitem[Gardner et~al.(2017)Gardner, Guo, Weinberger, Garnett, and Grosse]{gardner2017discovering}
J.~Gardner, C.~Guo, K.~Weinberger, R.~Garnett, and R.~Grosse.
\newblock Discovering and exploiting additive structure for bayesian optimization.
\newblock In \emph{Artificial Intelligence and Statistics}, pages 1311--1319. PMLR, 2017.

\bibitem[Wang et~al.(2018)Wang, Gehring, Kohli, and Jegelka]{wang2018batched}
Z.~Wang, C.~Gehring, P.~Kohli, and S.~Jegelka.
\newblock Batched large-scale bayesian optimization in high-dimensional spaces.
\newblock In \emph{International Conference on Artificial Intelligence and Statistics}, pages 745--754. PMLR, 2018.

\bibitem[Mutny and Krause(2018)]{mutny2018efficient}
M.~Mutny and A.~Krause.
\newblock Efficient high dimensional bayesian optimization with additivity and quadrature fourier features.
\newblock \emph{Advances in Neural Information Processing Systems}, 31, 2018.

\bibitem[Eriksson and Jankowiak(2021)]{eriksson2021high}
D.~Eriksson and M.~Jankowiak.
\newblock High-dimensional bayesian optimization with sparse axis-aligned subspaces.
\newblock In \emph{Uncertainty in Artificial Intelligence}, pages 493--503. PMLR, 2021.

\end{thebibliography}

\newpage
\appendix
\onecolumn

\section{Theoretical Analysis}

\subsection{Proof of Proposition \ref{prop:safe}}

\begin{proof}

Fix $t \ge 1$ and $\boldsymbol{z} \in \mathcal{Z}$. Conditioned on $\boldsymbol{y}_{t-1}=(y^g_1, \ldots, y^g_{t-1}), \{\boldsymbol{z}_1, \ldots, \boldsymbol{z}_{t-1}\}$ are deterministic, and $g(\boldsymbol{z})\sim \mathcal{N}(\mu_{t-1}(\boldsymbol{z}), \sigma_{t-1}(\boldsymbol{z})))$. For $r\sim \mathcal{N}(0, 1)$,
    $\text{Pr}(r>c) =1 - \Phi(c)$.

Therefore, by applying $r=(g(\boldsymbol{z})-\mu_{t-1}(\boldsymbol{z}))/\sigma_{t-1}(\boldsymbol{z})$ and $c=\beta_t$, the statement holds. 
\end{proof}

\subsection{Proof of Theorem \ref{thm:vio}}

We first introduce the results in \citet{chowdhury2017kernelized} which prove the selection of confidence scalar to contain the function with high probability, and the upper bound of cumulative standard deviations.

\begin{lemma}\label{lemma: cb}(Lemma 5.1 in \citet{srinivas2009gaussian}) 
    Let assumptions \ref{assum: rkhs} hold for $g$. For any $\delta \in (0, 1)$, with probability at least $1-\delta$, the following holds for all $x\in\mathcal{X}$ and $1\le t \le T, T\in \mathbb{N}$,
    \begin{align}
        |g(\boldsymbol{x})-\mu_{t-1}(\boldsymbol{x})| \le (\beta^c_t)^{\frac{1}{2}}\sigma_{t-1}(\boldsymbol{x}),
    \end{align}
    where $\beta^c_t = 2log(|\mathcal{X}|t^2\pi^2/6\delta)$
    
\end{lemma}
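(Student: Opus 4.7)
The plan is to follow the standard proof of Srinivas et al.'s Lemma 5.1: exploit the fact that, conditional on past observations, the posterior over $g(\boldsymbol{x})$ is Gaussian; apply a pointwise Gaussian tail inequality; then discharge the dependence on $\boldsymbol{x}$ and on $t$ by a double union bound whose per-time failure budget is summable in $t$.

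First I would fix $t \geq 1$ and $\boldsymbol{x} \in \mathcal{X}$, and condition on the observation history $\boldsymbol{y}_{t-1} = (y^g_1, \ldots, y^g_{t-1})$ taken at the (algorithm-chosen, hence random) sample locations $\boldsymbol{x}_1, \ldots, \boldsymbol{x}_{t-1}$. Under Assumption \ref{assum: rkhs}, this conditioning fixes both the sample locations and the posterior, so $g(\boldsymbol{x}) \sim \mathcal{N}(\mu_{t-1}(\boldsymbol{x}), \sigma_{t-1}^2(\boldsymbol{x}))$. Setting $r = (g(\boldsymbol{x}) - \mu_{t-1}(\boldsymbol{x}))/\sigma_{t-1}(\boldsymbol{x})$ gives $r \sim \mathcal{N}(0,1)$, and the two-sided Gaussian tail estimate $\Pr(|r| > c) \leq e^{-c^2/2}$ for $c > 0$, applied with $c = \sqrt{\beta^c_t}$, yields the pointwise failure probability $\Pr\bigl(|g(\boldsymbol{x}) - \mu_{t-1}(\boldsymbol{x})| > \sqrt{\beta^c_t}\,\sigma_{t-1}(\boldsymbol{x})\bigr) \leq e^{-\beta^c_t/2}$.

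Next I would remove the dependence on $\boldsymbol{x}$ and on $t$ by a two-layer union bound. Because $\mathcal{X}$ is finite, a union bound over $\boldsymbol{x}$ upper-bounds the time-$t$ failure probability by $|\mathcal{X}|\,e^{-\beta^c_t/2}$. I then apportion the total failure budget $\delta$ across $t \in \mathbb{N}$ as $6\delta/(\pi^2 t^2)$, which sums to $\delta$ via $\sum_{t=1}^\infty t^{-2} = \pi^2/6$. Demanding $|\mathcal{X}|\,e^{-\beta^c_t/2} \leq 6\delta/(\pi^2 t^2)$ and solving for $\beta^c_t$ gives exactly $\beta^c_t = 2\log\bigl(|\mathcal{X}|\,t^2\,\pi^2/(6\delta)\bigr)$, matching the stated constant, and the conclusion holds uniformly in $t$ (and thus a fortiori for $1 \leq t \leq T$).

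The only real subtlety---and the step I expect to be the main source of worry rather than of genuine difficulty---is the adaptivity of the sample locations: $\boldsymbol{x}_1, \ldots, \boldsymbol{x}_{t-1}$ are chosen by the algorithm from past data and are therefore random. The conditioning step above is what handles this cleanly: after conditioning on $\boldsymbol{y}_{t-1}$, the locations become measurable and the posterior is exactly Gaussian with the stated mean and variance; because the resulting per-step bound does not depend on the conditioning realization, marginalizing recovers the unconditional high-probability statement. The rest of the argument is routine bookkeeping of the union-bound constants.
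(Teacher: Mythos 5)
Your proposal is correct and is essentially the canonical argument: the paper itself does not prove this lemma but cites it as Lemma 5.1 of \citet{srinivas2009gaussian}, whose proof is exactly your conditioning-plus-Gaussian-tail step (using $\Pr(|r|>c)\le e^{-c^2/2}$) followed by the union bound over the finite $\mathcal{X}$ and the $\pi^2 t^2/6$ allocation over time, which yields precisely $\beta^c_t = 2\log\bigl(|\mathcal{X}|t^2\pi^2/(6\delta)\bigr)$. Your remark on handling the adaptively chosen sample locations by conditioning on $\boldsymbol{y}_{t-1}$ matches how the paper treats the same issue in its proof of Proposition \ref{prop:safe}.
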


Then we can bound the instantaneous safety violation of \algo.

\begin{lemma}\label{lemma:single}
    Let assumptions \ref{assum: rkhs} hold for safety function $g$. With probability at least $1-\delta$, the sample points of Algorithm \ref{alg:overall} for all time steps $1\le t \le T$ satisfy
    \begin{align}
       \mathbb{E}[v_t] = \mathbb{E}[\max(0, -g(\boldsymbol{x}_t))] \le 2(1-\alpha)(\beta^c_t)^{\frac{1}{2}} \sigma_{t-1}(\boldsymbol{x}_t).
    \end{align}
\end{lemma}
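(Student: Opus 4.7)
The plan is to decompose the expected violation into the probability that a violation occurs and the magnitude of the violation when it does, then bound these two factors using Proposition~\ref{prop:safe} and Lemma~\ref{lemma: cb} respectively. Since $v_t = \max(0, -g(\boldsymbol{x}_t))$ vanishes whenever $g(\boldsymbol{x}_t) \ge 0$, I would start by writing $v_t = -g(\boldsymbol{x}_t)\,\mathbf{1}\{g(\boldsymbol{x}_t) < 0\}$ so that the expectation factors cleanly into a magnitude term and an indicator whose expectation is a probability.

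For the magnitude, I would combine two inequalities that hold for the selected point $\boldsymbol{x}_t \in S_t$. The safe-set definition in line 7 of Algorithm~\ref{alg:overall} gives $\mu_{t-1}(\boldsymbol{x}_t) + \beta_t \sigma_{t-1}(\boldsymbol{x}_t) \ge 0$, i.e.\ $-\mu_{t-1}(\boldsymbol{x}_t) \le \beta_t \sigma_{t-1}(\boldsymbol{x}_t)$. Applying Lemma~\ref{lemma: cb} yields, with probability at least $1-\delta$, $g(\boldsymbol{x}_t) \ge \mu_{t-1}(\boldsymbol{x}_t) - (\beta^c_t)^{1/2} \sigma_{t-1}(\boldsymbol{x}_t)$, hence $-g(\boldsymbol{x}_t) \le -\mu_{t-1}(\boldsymbol{x}_t) + (\beta^c_t)^{1/2} \sigma_{t-1}(\boldsymbol{x}_t)$. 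Adding the two inequalities gives $-g(\boldsymbol{x}_t) \le (\beta_t + (\beta^c_t)^{1/2})\,\sigma_{t-1}(\boldsymbol{x}_t)$. Because $\Phi(\beta_t)\le 1-\alpha$ forces $\beta_t \le \Phi^{-1}(1-\alpha)$, a fixed constant (and non-positive once $\alpha \ge 1/2$), whereas $(\beta^c_t)^{1/2}$ grows with $\log t$, the inequality $\beta_t \le (\beta^c_t)^{1/2}$ is benign and yields $v_t \le 2(\beta^c_t)^{1/2}\sigma_{t-1}(\boldsymbol{x}_t)\,\mathbf{1}\{g(\boldsymbol{x}_t) < 0\}$.

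For the probability factor, Proposition~\ref{prop:safe} applies directly: since $\boldsymbol{x}_t \in S_t$ satisfies $u_t(\boldsymbol{x}_t) \ge 0$, we obtain $\text{Pr}(g(\boldsymbol{x}_t) \ge 0)\ge \alpha$, equivalently $\text{Pr}(g(\boldsymbol{x}_t) < 0) \le 1-\alpha$. Taking expectation over $g(\boldsymbol{x}_t)$ conditional on the observation history through time $t-1$, on which $\mu_{t-1}(\boldsymbol{x}_t)$ and $\sigma_{t-1}(\boldsymbol{x}_t)$ are deterministic, and then applying the tower property, combines the magnitude bound with this probability estimate to produce the claimed $\mathbb{E}[v_t] \le 2(1-\alpha)(\beta^c_t)^{1/2}\sigma_{t-1}(\boldsymbol{x}_t)$.

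The main obstacle I anticipate is reconciling two separate layers of randomness: the GP posterior of $g(\boldsymbol{x}_t)$ given past observations (which drives Proposition~\ref{prop:safe} and hence the probability factor) versus the high-probability event under which the worst-case deviation bound of Lemma~\ref{lemma: cb} holds (which drives the magnitude factor). Working conditionally on the filtration up to time $t-1$ makes the posterior moments deterministic and lets the two bounds be multiplied rather than convolved; the $1-\delta$ slack on the deviation bound is then inherited by the outer high-probability clause that the lemma already carries, so nothing additional is lost in the final estimate.
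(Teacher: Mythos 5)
Your proof is correct and follows essentially the same route as the paper's: the $(1-\alpha)$ factor comes from Proposition~\ref{prop:safe} together with the safe-set condition $u_t(\boldsymbol{x}_t)\ge 0$, and the magnitude factor $2(\beta^c_t)^{1/2}\sigma_{t-1}(\boldsymbol{x}_t)$ comes from Lemma~\ref{lemma: cb} combined with that same condition (the paper reaches it by adding and subtracting $u_t$ and using subadditivity of $[\cdot]^{+}$, you by directly writing $v_t=-g(\boldsymbol{x}_t)\mathbf{1}\{g(\boldsymbol{x}_t)<0\}$, which is only a bookkeeping difference). You even make explicit two points the paper leaves implicit — the step $\beta_t\le(\beta^c_t)^{1/2}$ and the conditioning needed to multiply the high-probability deviation bound by the posterior violation probability — so nothing is missing.
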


\begin{proof}
We denote $[\cdot]^{+}\coloneqq \max(\cdot, 0)$. Then we have
\begin{align}
    \mathbb{E}[v_t] & = \mathbb{E}[[-g(\boldsymbol{x}_t)]^{+}]\\
    & = \mathbb{E}[[-g(\boldsymbol{x}_t) - u_t(\boldsymbol{x}_t) + u_t(\boldsymbol{x}_t)]^{+}]\\
  \label{eq: tri}  & \le \mathbb{E}[[-g(\boldsymbol{x}_t) + u_t(\boldsymbol{x}_t)]^{+} + [- u_t(\boldsymbol{x}_t)]^{+}]\\ 
    \label{eq: remove}& = \mathbb{E}[[-g(\boldsymbol{x}_t) + u_t(\boldsymbol{x}_t)]^{+}]\\
    & = \mathbb{E}[[-g(\boldsymbol{x}_t) + u_t(\boldsymbol{x}_t)]^{+} \mathbbm{1}\{g(\boldsymbol{x}_t)\ge u_t(\boldsymbol{x}_t)\} + [-g(\boldsymbol{x}_t) + u_t(\boldsymbol{x}_t)]^{+}\mathbbm{1}\{g(\boldsymbol{x}_t)<u_t(\boldsymbol{x}_t)\}]\\ 
  \label{eq: beta}  & = (1-\alpha)[-g(\boldsymbol{x}_t) + u_t(\boldsymbol{x}_t)]^{+}\\ 
    \label{eq: cb} & \le (1-\alpha)[-(\mu_{t-1}(\boldsymbol{x}_t)-(\beta^c_t)^{\frac{1}{2}}\sigma_{t-1}(\boldsymbol{x}_t)) + \mu_{t-1}(\boldsymbol{x}_t)+(\beta^c_t)^{\frac{1}{2}}\sigma_{t-1}(\boldsymbol{x}_t) ]\\
    & = 2(1-\alpha)(\beta^c_t)^{\frac{1}{2}} \sigma_{t-1}(\boldsymbol{x}_t),
\end{align}

where the inequality (\ref{eq: tri}) follows by the fact that $[a+b]^{+} \le [a]^{+} + [b]^{+}, \forall a, b \in \mathbb{R}$, the equality (\ref{eq: remove}) is derived from the fact that $u_t(\boldsymbol{x}_t) \ge 0$, the equality (\ref{eq: beta}) is derived from Proposition \ref{prop:safe}, and the inequality (\ref{eq: cb}) is derived from Lemma \ref{lemma: cb}.

\end{proof}

\begin{lemma}\label{lemma: cum_std} (Theorem 1 in \citet{srinivas2009gaussian}.) 
    Let $\boldsymbol{x}_1, \cdots, \boldsymbol{x}_T$ be the points selected by the algorithms. With $C_1 \coloneqq 8/\log(1+\sigma^{-2})$,
    \begin{align}
        \sum_{t=1}^{T} 2(\beta^c_t)^{\frac{1}{2}}\sigma_{t-1}(\boldsymbol{x}_t) \le \sqrt{ C_1 T \beta^c_T \gamma_T}
    \end{align}
    
\end{lemma}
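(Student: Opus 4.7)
The plan is to prove the bound by combining three standard ingredients: monotonicity of the confidence radius, Cauchy--Schwarz, and a variance-to-information-gain conversion. The statement is Theorem~1 of Srinivas et al., so my goal is to reproduce that argument adapted to the notation here.

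First, I would use that $\beta^c_t = 2\log(|\mathcal{X}|t^2\pi^2/(6\delta))$ is nondecreasing in $t$, so $\beta^c_t \le \beta^c_T$ for every $t \le T$, which lets me pull the confidence radius outside the sum:
\begin{equation*}
\sum_{t=1}^{T} 2(\beta^c_t)^{1/2}\sigma_{t-1}(\boldsymbol{x}_t) \;\le\; 2(\beta^c_T)^{1/2}\sum_{t=1}^{T} \sigma_{t-1}(\boldsymbol{x}_t).
\end{equation*}
Applying Cauchy--Schwarz to the remaining sum gives
$\sum_{t=1}^{T}\sigma_{t-1}(\boldsymbol{x}_t) \le \sqrt{T\sum_{t=1}^{T}\sigma^2_{t-1}(\boldsymbol{x}_t)}$, reducing the task to bounding the sum of posterior variances.

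The crux is to relate $\sum_t \sigma^2_{t-1}(\boldsymbol{x}_t)$ to the maximum information gain $\gamma_T$. Under the standard normalization $k(\boldsymbol{x},\boldsymbol{x}) \le 1$, the prior (hence posterior) variance satisfies $\sigma^2_{t-1}(\boldsymbol{x}_t) \le 1$. I would then invoke the pointwise inequality
\begin{equation*}
s^2 \;\le\; \frac{1}{\log(1+\sigma^{-2})}\,\log\!\bigl(1+\sigma^{-2} s^2\bigr) \qquad \text{for } s^2 \in [0,1],
\end{equation*}
which follows from the fact that the function $u \mapsto u/\log(1+\sigma^{-2}u)$ is monotone and so attains its maximum on $[0,1]$ at $u=1$. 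Applied with $s = \sigma_{t-1}(\boldsymbol{x}_t)$ and summed, this yields
\begin{equation*}
\sum_{t=1}^{T} \sigma^2_{t-1}(\boldsymbol{x}_t) \;\le\; \frac{1}{\log(1+\sigma^{-2})}\sum_{t=1}^{T}\log\!\bigl(1+\sigma^{-2}\sigma^2_{t-1}(\boldsymbol{x}_t)\bigr).
\end{equation*}
Next, I would identify the right-hand sum as twice the mutual information accrued by the selected samples: for Gaussian observations, $I(\boldsymbol{y}_T; g) = \tfrac{1}{2}\sum_{t=1}^{T}\log(1+\sigma^{-2}\sigma^2_{t-1}(\boldsymbol{x}_t))$, and by the definition of $\gamma_T$ as the maximum information gain over any $T$-point set, $I(\boldsymbol{y}_T; g) \le \gamma_T$. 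Hence $\sum_t \sigma^2_{t-1}(\boldsymbol{x}_t) \le 2\gamma_T/\log(1+\sigma^{-2})$.

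Chaining everything,
\begin{equation*}
\sum_{t=1}^{T} 2(\beta^c_t)^{1/2}\sigma_{t-1}(\boldsymbol{x}_t) \;\le\; 2(\beta^c_T)^{1/2}\sqrt{T\cdot\frac{2\gamma_T}{\log(1+\sigma^{-2})}} \;=\; \sqrt{\frac{8\,T\,\beta^c_T\,\gamma_T}{\log(1+\sigma^{-2})}} \;=\; \sqrt{C_1\,T\,\beta^c_T\,\gamma_T},
\end{equation*}
which is the claim. The main obstacle is the variance-to-log inequality: it rests on the implicit normalization $k(\boldsymbol{x},\boldsymbol{x}) \le 1$ and on the monotonicity argument for $u/\log(1+\sigma^{-2}u)$, both of which must be stated explicitly. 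The identification of $\tfrac{1}{2}\sum_t \log(1+\sigma^{-2}\sigma^2_{t-1}(\boldsymbol{x}_t))$ with the mutual information is a standard fact about Gaussian process regression that I would cite rather than reprove.
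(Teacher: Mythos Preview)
Your argument is correct and is precisely the standard proof from Srinivas et al.\ (2009): monotonicity of $\beta^c_t$, Cauchy--Schwarz, the pointwise inequality $s^2 \le \log(1+\sigma^{-2}s^2)/\log(1+\sigma^{-2})$ on $[0,1]$, and the identification of $\tfrac{1}{2}\sum_t \log(1+\sigma^{-2}\sigma^2_{t-1}(\boldsymbol{x}_t))$ with the accrued mutual information bounded by $\gamma_T$. The paper itself does not prove this lemma at all; it simply states it and attributes it to Theorem~1 of Srinivas et al., so there is nothing to compare against beyond confirming that you have faithfully reproduced that reference's argument, which you have.
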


Finally we bound the summation of instantaneous safety violations.

\begin{proof}
    \begin{align}
       \mathbb{E}[V_T] & =  \mathbb{E}[\sum_{t=1}^{T} v_t]\\
       & = \sum_{t=1}^{T} \mathbb{E}[v_t]\\
      \label{eq:single} & \le \sum_{t=1}^{T} 2(1-\alpha)(\beta^c_t)^{\frac{1}{2}} \sigma_{t-1}(\boldsymbol{x}_t)\\ 
      \label{eq:mono} & \le \sum_{t=1}^{T} 2(1-\alpha)(\beta^c_T)^{\frac{1}{2}} \sigma_{t-1}(\boldsymbol{x}_t)\\ 
     \label{eq:ig}  & \le (1-\alpha)\sqrt{C_1T\beta^c_T\gamma_T},
    \end{align}

    where the inequality (\ref{eq:single}) follows by Lemma \ref{lemma:single}, the inequality (\ref{eq:mono}) follows by the monotonicity of $\beta^c_t$, and the inequality (\ref{eq:ig}) follows by Lemma \ref{lemma: cum_std}.
\end{proof}

\subsection{Proof of Proposition \ref{prop:iso}}

\begin{proof}
    Define $k^{\mathcal{X}}, k^{\mathcal{Z}}$ are the kernel function of GP with same hyperparameter over $\mathcal{X}$ and $\mathcal{Z}$ respectively. For stationary kernel, the kernel function value between $\boldsymbol{x}$ and $\boldsymbol{x}'$ is only depend on $|\boldsymbol{x}-\boldsymbol{x}'|$, therefore we can write the $k^{\mathcal{X}}, k^{\mathcal{Z}}$ as a function of metric $d_{\mathcal{X}}, d_{\mathcal{Z}}$:
\begin{align}
    k^\mathcal{X}(\boldsymbol{x}, \boldsymbol{x}') = k^\mathcal{X}(d_{\mathcal{X}}(\boldsymbol{x}, \boldsymbol{x}')), k^\mathcal{Z}(\boldsymbol{z}, \boldsymbol{z}') = k^\mathcal{Z}(d_{\mathcal{Z}}(\boldsymbol{z}, \boldsymbol{z}')).
\end{align}
We define $k^{\mathcal{X}}$ as the kernel function of GP over $\mathcal{X}$, and  $u^\mathcal{X}_t$ as the estimated upper confidence bound of GP over $\mathcal{X}$. Utilizing the property of isometric embedding, we can obtain equivalent GP estimation between original space and embeded subspace:
\begin{align}
    k^{\mathcal{Z}}(d_{\mathcal{Z}}(\boldsymbol{z}, \boldsymbol{z}'))& = k^\mathcal{X} (d_{\mathcal{X}}(\Pi^{-1}(\boldsymbol{z}), \Pi^{-1}(\boldsymbol{z}'))).
\end{align}

Therefore the estimated upper confidence bound is also equivalent. For every sample $\boldsymbol{z}$ from \algo, we have

\begin{align}
    \label{eq:ori_ucb}\text{Pr}(g(\Pi^{-1}(\boldsymbol{z})) \ge 0) &\ge 
    \text{Pr}(u^{\mathcal{X}}_t(\Pi^{-1}(\boldsymbol{z})) \ge 0) \\
    &=  \text{Pr}(u^{\mathcal{Z}}_t(\boldsymbol{z}) \ge 0) \\
    &\ge \alpha,
\end{align}

the statement holds.

\end{proof}

\section{Experimental Details}

The full implementation of our experiments can be found on our project page: \url{https://lnsgroup.cc/research/hdsafebo}. Our musculoskeletal model will be released soon. In the meantime, the model can be accessed for research purposes upon request (ysui@tsinghua.edu.cn).

\subsection{PCA training}

We employ PCA using the scikit-learn library \footnote{\url{https://scikit-learn.org/stable/modules/generated/sklearn.decomposition.PCA.html}} with default parameter settings. 

\subsection{Autoencoder training}

We employ IRVAE on neural stimulation task and digital generation task by directly using the paper's original repository\footnote{\url{https://github.com/Gabe-YHLee/IRVAE-public}}. We use MLP as the VAE module for all tasks, and list the model detail in Table \ref{tab:ae_model}. We train all models for 300 epochs using Adam\citep{kingma2014adam} optimizer with a learning rate of 0.0001.

\begin{table}[htb]
    \centering
    \begin{tabular}{cccc}
    \toprule 
       Task &Layer number & Hiddien number & Latent dimension\\
       \midrule
      Neural stimulation  & 4 & 256 & 16\\
      Digital generation & 4 & 256 & 6\\
      \bottomrule
    \end{tabular}
    \caption{Autoencoder model detail.}
    \label{tab:ae_model}
\end{table}

\subsection{Algorithm Implementation}

For implementation of \algo, We use BoTorch as the GP inference component \citep{balandat2020botorch}. We also use BoTorch to replicate LineBO, SCBO, CONFIG, cEI and cEI-Prob. For LineBO, we choose to implement the random line embedding version shown in the main paper. We use the package pycma\footnote{\url{https://github.com/CMA-ES/pycma}} to run CMA-ES on benchmarks.

All GP-based methods uses mat\'{e}rn kernel and fits kernel parameters after each iteration. During the experiment, we set $\tau_s, \tau_f, l_0, l_{\text{max}}, l_{\text{min}}$ for \algo~and SCBO (the default setting of SCBO). We use Thompson sampling for LineBO and CONFIG, and use inherent acquisition function for other BO baselines. Confidence scalar $\beta$ is set as 2 for \algo, LineBO and CONFIG across all experiments. We set the latent optimization bound as the upper bound and lower bound of training points in the latent space. Other baseline parameters are set to default values as in the original implementation. 

During the experiment, we set the sample size to 10 for all tasks in the main paper, and sample one point each iteration in constrained digital generation task. 

\subsection{Synthetic Function}

We sample objective and safety function from separate Gaussian process with Mat\'{e}rn kernel and length scale as 0.05, which is implemented using GPyTorch\footnote{\url{https://gpytorch.ai/}}. We set the safety threshold to $-0.75$. For each independent run, we use a random linear projection $\Pi_{\text{rand}} \in \mathbb{R}^{D\times d}$ to create $d$-dimensional latent space, and randomly select $d_e$ variables as the function effective dimension. 

\subsection{Musculoskeletal Model Control}
\label{sec:add_mus}
We use a full-body musculoskeletal model which actuates locomotion by controlling muscle activation. Here we only control the right hand part (below elbow), and fix other joints, leading to 55 muscles and 28 joints. The primary task is to control hand muscles to maintain a steady vertical grip on a bottle. At the beginning of the episode, the bottle is initially positioned horizontally in the hand. The initial task involves first rotating the bottle to achieve and maintain a vertical orientation. At each time step, the reward from the environment is computed as follows:

\begin{align}
    r = r_{\text{pose}} + r_{\text{bonus}} - 10*r_{\text{penalty}} + r_{\text{grasp}} + 2*r_{\text{survive}} - r_{\text{activation}} - 100*r_{\text{drop}}
\end{align}

where $r_{\text{pose}}$ is the difference between the bottle and vertical orientation, computed by Euler angle. $r_{\text{bonus}}$ is the reward when the difference falls below a predefined threshold. $r_{\text{penalty}}$ is positive when the bottle position is out of from the predefined range. $ r_{\text{grasp}}$ is the distance between the centroid of the bottle and palm joints. $r_{\text{survive}}$ is the reward for not dropping the bottle during the current time step. $r_{\text{activation}}$ is the penalty for large muscle activations. $r_{\text{drop}}$ is the penalty when the bottle drop from hand. The overall simulation is based on Mujoco \cite{todorov2012mujoco}.

When the height of the bottle is below 0.4m, we consider it to be dropped from the hand and the episode concludes. We record the speed of episode ending as the landing speed of the bottle. We use a safety threshold of 4.4, which is the average landing speed when randomly sampling the environment. We train a Soft Actor-Critic agent \citep{haarnoja2018soft} for 150k time steps under different observation setting, and rollout for 1000 episode to build a muscle activation dataset with 107, 439 datapoints.

The performance video of \algo~and other baselines (in the subspace from PCA) is shown in the supplementary file. \algo~is capable of quickly rotating the bottle and hold vertically and steadily, while other algorithms either hold the bottle non-vertically, or learn drop the bottle safely to avoid penalty of wrong orientation.

\subsection{Modeling of the human neuromuscular system}\label{sec:scs_detail}
We developed an average model of the human spinal cord based on anatomical statistics(model paper under review, \citep{1894Fifth,1916Regarding,2010Microsurgical,rowald2022activity,1996Morphologic}).
The model contains gray matter, white matter, nerve roots, cerebrospinal fluid (CSF), and dura mater of T12-S2 segments of the spinal cord which are related with the motor control of lower limbs.
The specific conductivity values of the modeled tissues were set refer to \citep{2010Stimulation}.
Electric fields induced by different stimulation parameters were derived using finite element method (FEM).
To calculate the stimulation effects for different muscles, we redistricted the cord model according to reported results of the segmental innervation for lower limb muscles(\citep{1964THE,2011Heuristic}). 
Six groups of muscles of bilateral lower limbs were studied: iliopsoas (IL), vastus lateralis and rectus femoris (VL\&RF), tibialis anterior (TA), biceps femoris muscle and gluteus maximus (BF\&GM), semitendinosus (ST), and gastrocnemius (GA).

To evaluate the selectivity of stimulation for certain muscle, we used a selectivity index (SI) to characterize the distribution of the electric field. The selectivity index for the \textit{i}th muscle was defined as follows:

\begin{equation}
  {\rm SI_i} = \mu _i - \frac{1}{m_{\rm neighbor} - 1}\sum_{j\neq i}^{m_{\rm neighbor}} \mu _j
  \label{eq:selectivity_index}
\end{equation}
where $m_{\rm neighbor}$ represents the number of muscles whose motor neuron pools are adjacent to the \textit{i}th muscle's. 
The selectivity index ranges from -1 to 1, where -1 represents the maximum of activation of all undesired muscles with a complete absence of activation of the targeted muscle, 0 indicates that all muscles are activated at the same level, and 1 means the targeted muscle is activated at the greatest extent while no undesired muscles are activated.
And $\mu_i$ is the normalized activation of the \textit{i}th muscle and is defined as follows in the simulation:

\begin{equation}
  \mu _i = \frac{\iiint_{\Omega_i} f(x,y,z) \mathrm{d} x \mathrm{d} y \mathrm{d} z}{\iiint_{\Omega_i} 1 \mathrm{d} x \mathrm{d} y \mathrm{d} z}
  \label{eq:mu_i}
\end{equation}

\begin{equation}
  f(x,y,z) = \left\{
  \begin{aligned}
  1, \qquad  {\rm if\,}AF(x,y,z) >  AF_{\rm threshold}\\
  0, \qquad  {\rm if\,}AF(x,y,z) \le  AF_{\rm threshold}
  \end{aligned}
  \right.
  \label{eq:f_x}
\end{equation}
$\Omega_i$ is the segmental volume of the \textit{i}th muscle in the cord.
$AF$ is the activating function, defined as the second
spatial derivative of extracellular voltage along an axon(\citep{Rattay1986Analysis, Butson2006}).

We use the spinal model to traverse all stimulation parameters with 1 cathode with anodes no more than 3, and 2 cathodes with anodes no more than 2, leading to a spinal cord stimulation (SCS) dataset with 218,000 stimulation parameters and predicted muscle activation. We compute the objective function using \ref{eq:selectivity_index}, and compute the safety function as $g(x)=1-max_{i}(\mu_i)$. The selectivity index distribution is shown in Figure \ref{fig:SI_distribution}. We set the safe threshold as 0.05, with nearly half of the traversed parameters are safe.

We convert electrode parameters from 17d vector to a 2d electric field image using simplified computation. In concrete, we map contact combinations to the spatial position in the electrode, linearly compute the diffusion of electrical field from each cathode and anode, and multiply the map by current intensity. One example of generated electric field image is shown in Figure \ref{fig:ele_map}.

We use the constructed SCS dataset as the function oracle. For a given 2d map, we set its function value as the function value of the nearest unevaluated point in the dataset measured by the electrical field map. 

\begin{figure}[h!]
  \centering
    \includegraphics[width=1.0\textwidth]{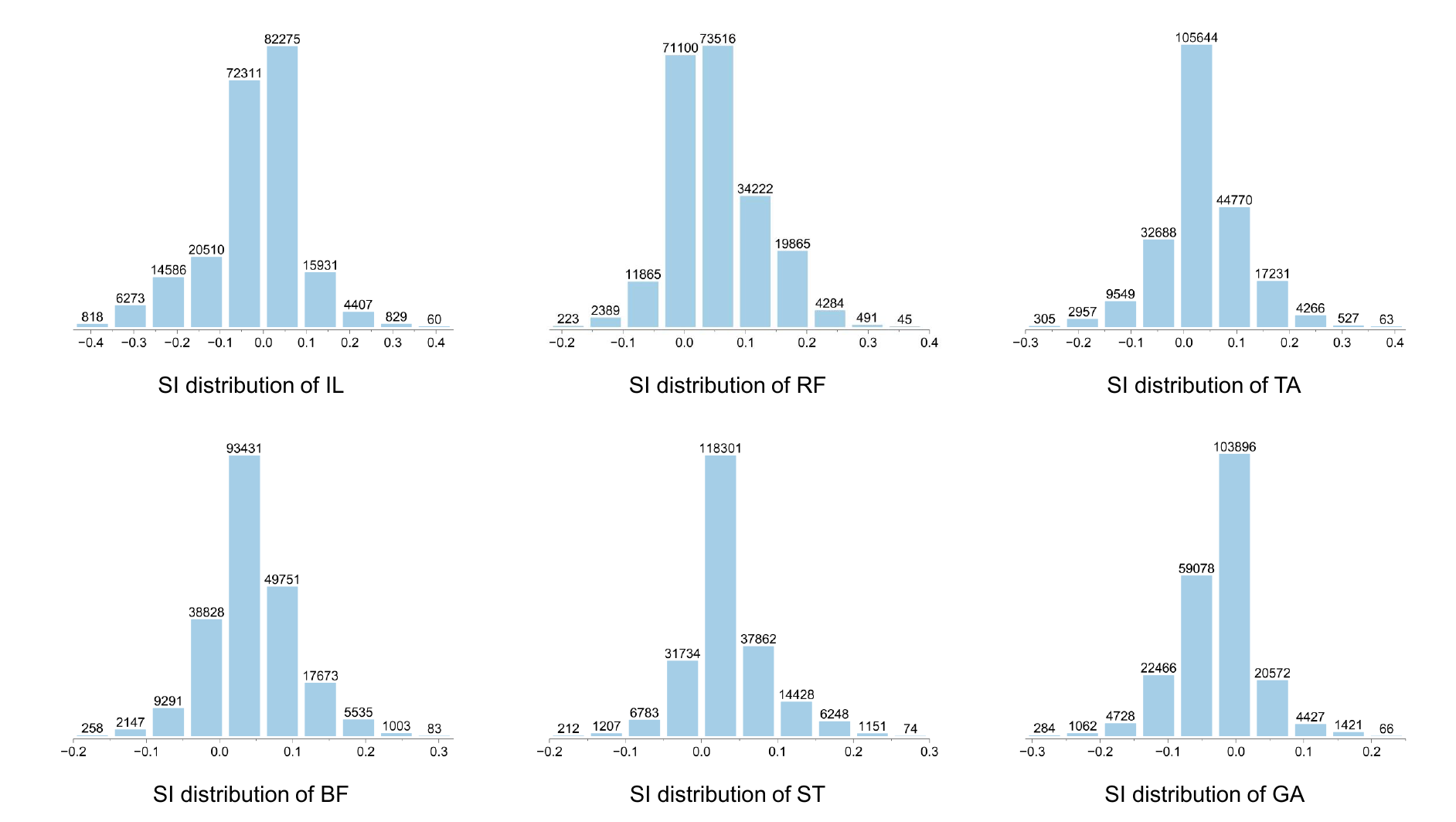} \\
  \caption{Distribution of SI for six muscle groups of different configurations used in SCS simulation experiment }
 \label{fig:SI_distribution}
 \vspace{0.2in}
\end{figure}

\begin{figure}[h!]
  \centering
    \includegraphics[width=0.25\textwidth]{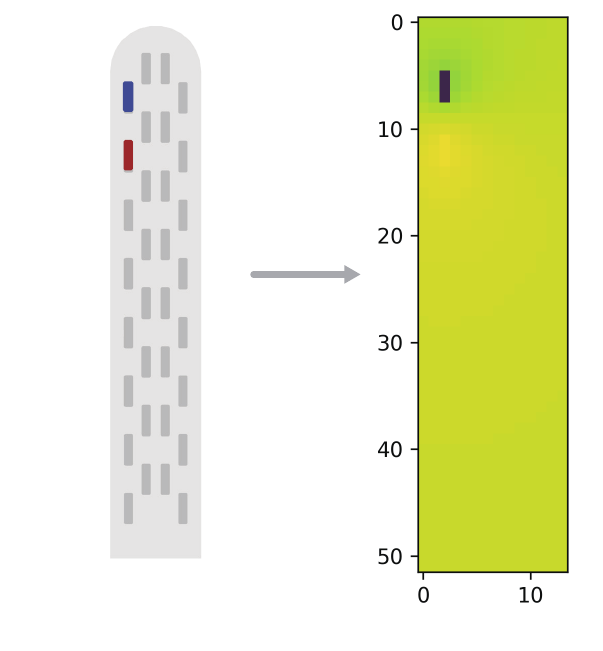} \\
  \caption{2d electrical map computation.}
 \label{fig:ele_map}
 \vspace{0.2in}
\end{figure}

\subsection{Clinical experiment of neuromuscular system control}

We employ \algo~in the treatment of spinal cord stimulation to find more selective stimulation parameters for different muscles. The clinical experiments received approval from the Institutional Review Board (IRB) of the hospital. All the trials were conducted under the supervision of therapists. The patient was seated in the wheelchair in a comfortable way and was told to relax. During the first period, typical parameters which were usually used in the therapy (e.g. bipolar stimulation) were delivered to the patient while the evoked muscle activities were recorded using EMG. These data (132 trials) were used to initialize \algo. Except for the first 132 trials as the initial data, 441 out of 504 trials are recommended by \algo. The other trials were conducted by the therapist.

We focus on 8 groups of muscles: iliopsoas (IL), rectus femoris (RF), tibialis anterior (TA), and biceps femoris (BF) for both sides. The clinical selectivity index was defined as following:
\begin{equation}
  {\rm SI_i} = \frac{\mu _i}{1 + \sum_{j\neq i}^{m}\mu _j}
  \label{eq:clinical_selectivity_index}
\end{equation}
where $\mu _i$ represents the normalized peak-to-peak value of the evoked EMG for the $i$th muscle and $m$ is the total number of the target muscles.

During the optimization, We restrict the number of cathodes and anodes to only evaluate configurations consistent with clinical priors. We used the Electromyography (EMG) to compute the selectivity index and queried safety scores from the patient and the therapists. 
For each trial, our algorithm recommended the parameter based on the history data and configured it onto the stimulator, which would deliver electrical stimulation to the patient for 800 ms at a frequency of 10 Hz. Peak-to-peak values were averaged and normalized to obtain selectivity indices of different muscles after stimulation. The calculated feedback and queried safety score were used to update the optimizer and it would recommend a new parameter.
 
The tasks were refined sequentially and all the history data were reused when optimizing a new task.

\section{Additional Experiment}
\label{sec:add_exp}

\subsection{Baselines performance over embeded subspace}

We run baselines on the same embedded space as HdSafeBO in optimizing GP synthetic functions (Table \ref{tab:dim}). We observe that, overall, the optimization and safety performances slightly improved. However, HdSafeBO still outperforms these baselines, as the reduced space remains too high-dimensional for them.

\begin{table*}[th]
\caption{Algorithm performance on GP synthetic functions. We show the averaged performance of 500 evaluations over 100 independent runs. "L" indicates baselines optimize over the embeded latent space, "O" indicates baselines optimize over the original input space.}
    \label{tab:dim}
\vspace{10pt}
\resizebox{\textwidth}{!}{%
\begin{tabular}{ccccccccccccccc}
        \toprule 
        Method & HdSafeBO & \multicolumn{2}{c}{LineBO} & \multicolumn{2}{c}{SCBO} & \multicolumn{2}{c}{CONFIG} & \multicolumn{2}{c}{cEI}  & \multicolumn{2}{c}{CMAES} \\
        & & L & O & L & O & L & O & L & O & L & O &\\
        \midrule

        Objective ($\uparrow$)& $\boldsymbol{3.96\pm0.15}$ & $3.03\pm0.05$ & $3.07\pm0.04$ & $3.15\pm0.06$ & $2.95\pm0.04$ & $2.99\pm0.04$ & $2.91\pm0.05$ & $2.92\pm0.03$ & $2.9\pm0.03$ & $2.69\pm0.04$ & $2.7\pm0.04$ \\
        Safety ($\uparrow$) & $\boldsymbol{0.81\pm0.02}$ & $0.79\pm0.0$ & $0.78\pm0.0$ & $0.77\pm0.0$ & $0.77\pm0.0$ & $0.77\pm0.0$ & $0.77\pm0.0$ & $0.77\pm0.0$ & $0.77\pm0.0$ & $0.77\pm0.01$ & $0.78\pm0.01$ \\
        Violation ($\downarrow$) & $\boldsymbol{27.42\pm4.01 }$ & $36.26\pm1.28$ & $36.59\pm0.82$ & $38.51\pm0.64$ & $38.47\pm0.69$ & $39.08\pm0.6$ & $38.96\pm0.96$ & $39.15\pm0.64$ & $39.61\pm0.58$ & $38.01\pm1.94$ & $38.65\pm1.89$   \\
        \bottomrule
        
    \end{tabular}
  
  }
  \vspace{5pt}
  
\end{table*}

\subsection{Ablation on algorithm components}
\label{ab:algo}
In the musculoskeletal system control task, we conducted an ablation study of two components in HdSafeBO: local search and optimistic safe identification (Figure \ref{fig:muscle_ab_comp}). We observed that without local search, the algorithm tends to over-explore, leading to degraded optimization and safety performance in this high-dimensional problem. Without optimistic safe identification, the algorithm makes more unsafe selections during the early stages of optimization. Combining these two components enables a safe and efficient optimization procedure.

\begin{figure}[!htb]
    \centering
        \includegraphics[width = .98\linewidth]{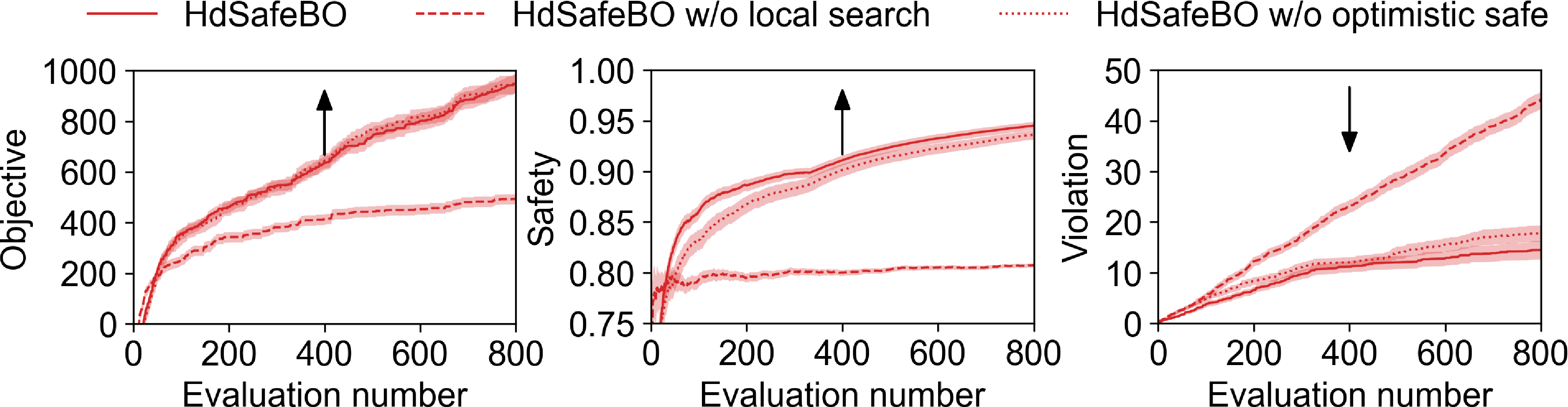}
    \caption{Ablation on components of HdSafeBO. Optimization performance averaged over 50 independent runs.}
    \label{fig:muscle_ab_comp}
\end{figure}

\subsection{Distance Preserving of IRVAE}\label{sec:irvae}

In the implementation of \algo, we explore the use of IRVAE to learn a mapping with distance-preserving property. We randomly sample 10,000 datapoints from SCS dataset. Using conventional VAE and IRVAE trained from same dataset, we compute the point-wise distance as well as GP estimation difference between the original and latent space, as shown in Table \ref{tab:distance}. The results  

Using the 218,000 synthetic training data in human neural stimulation control, we plotted the pair-wise distance of points in both original input space and latent space learns by IRVAE (Figure \ref{fig:emb_dist}). We can observe that the learned embedding space exhibits approximately scaled isometry.

\begin{table}[htb]
    \centering
    \resizebox{\textwidth}{!}{
    \begin{tabular}{cccc}
    \toprule 
       Model & Linear correlation of distance ($\uparrow$) & GP mean estimation difference ($\downarrow$) & GP variance estimation difference ($\downarrow$)\\
       \midrule
      IRVAE  & $\boldsymbol{0.9729}$ & $\boldsymbol{1.2936}$ & $\boldsymbol{2.9953}$\\
      conventional VAE & $0.8923$ & $2.1839$ & $4.5608$\\
      \bottomrule
    \end{tabular}
    }
    \caption{Distance preserving comparison between IRVAE and conventional VAE}
    \label{tab:distance}
\end{table}

\begin{figure}[!htb]
    \centering
        \includegraphics[width = .25\linewidth]{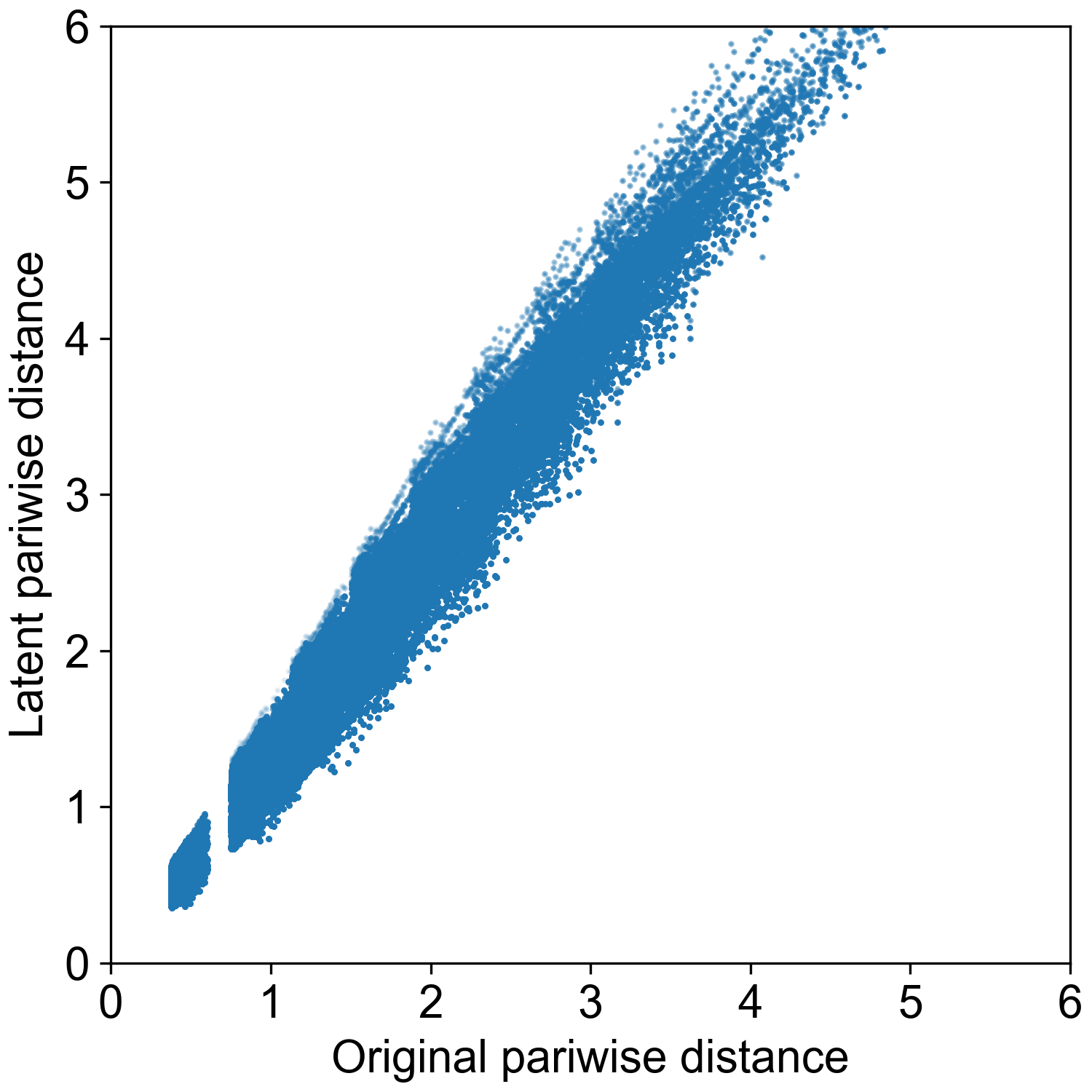}
    \caption{Pair-wise Distance of training data points in neural stimulation induced human motion control task.}
    \label{fig:emb_dist}
\end{figure}

\subsection{Ablation on Confidence Bound Scalar}

Here we run \algo~with $\beta=0, 2, 4, 8, 16$ on musculoskeletal model control task, and shown the results in Figure \ref{fig:ablation}. The safety and violation metric may become slightly worse as $\beta$ increases. We observe the algorithm performance is similar under a wide range choice of $\beta$ on this high-dimensional task.

\begin{figure}[htbp]
  \centering
\subfigure[]{\includegraphics[width=0.24\textwidth]{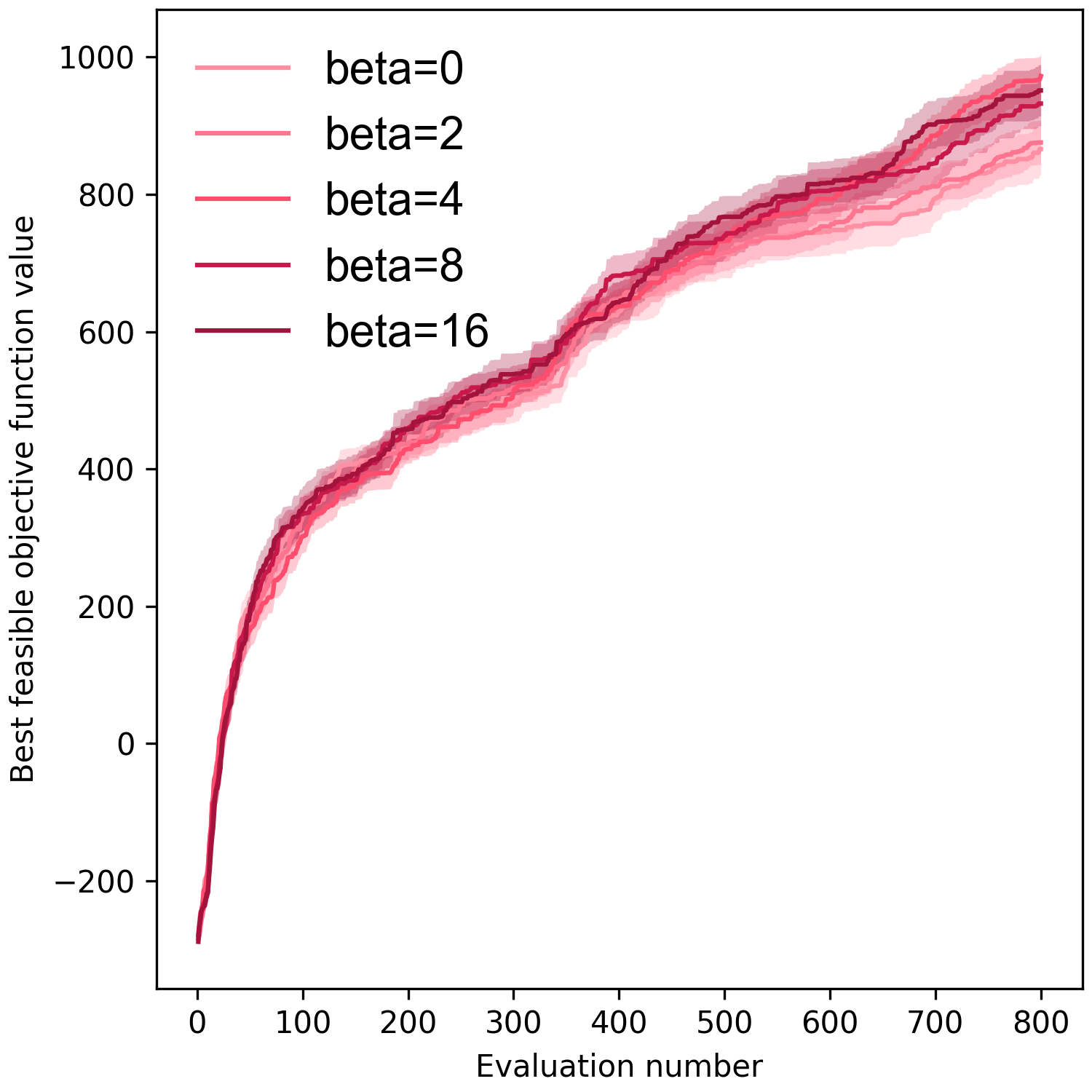}} 
\subfigure[]{\includegraphics[width=0.24\textwidth]{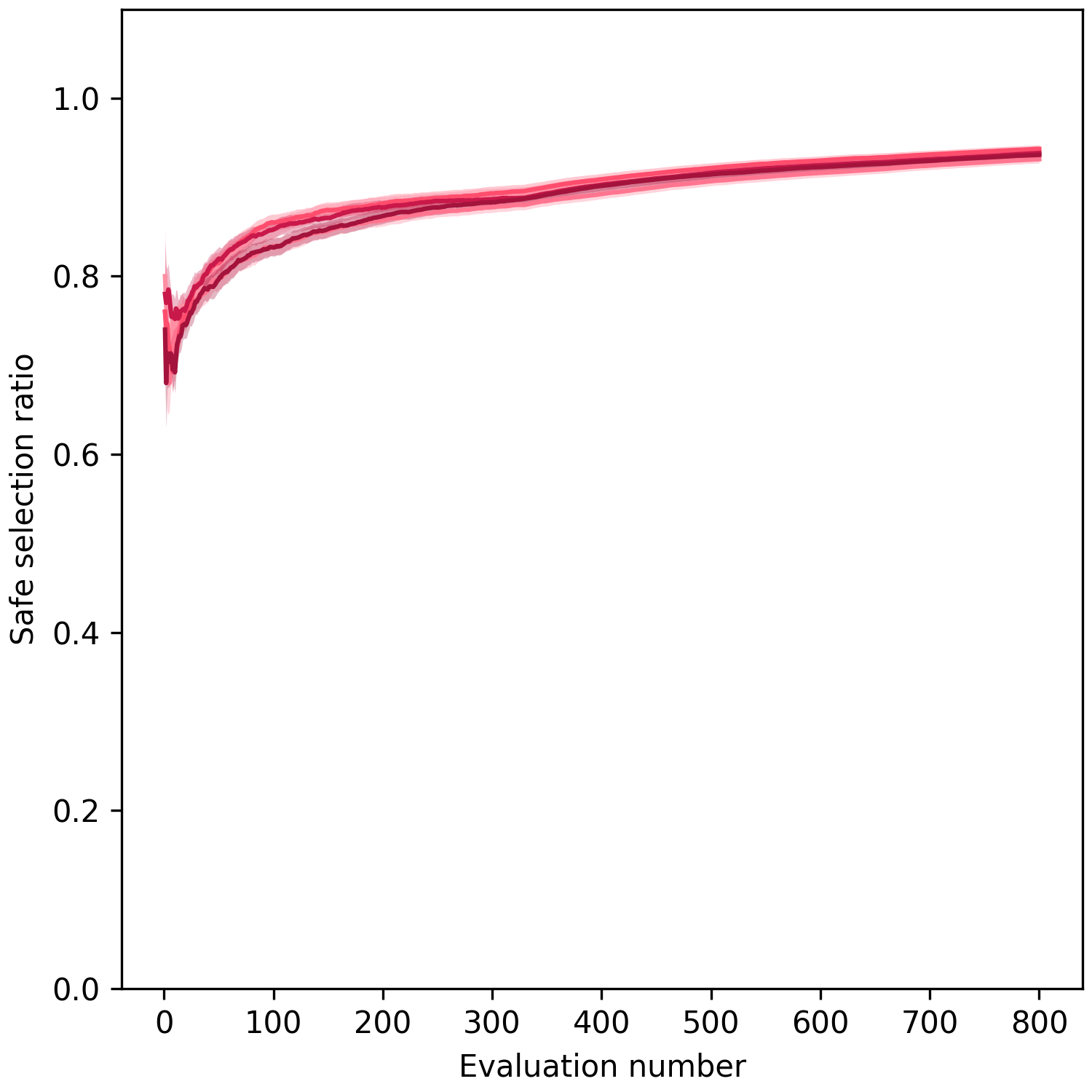}} 
\subfigure[]{\includegraphics[width=0.24\textwidth]{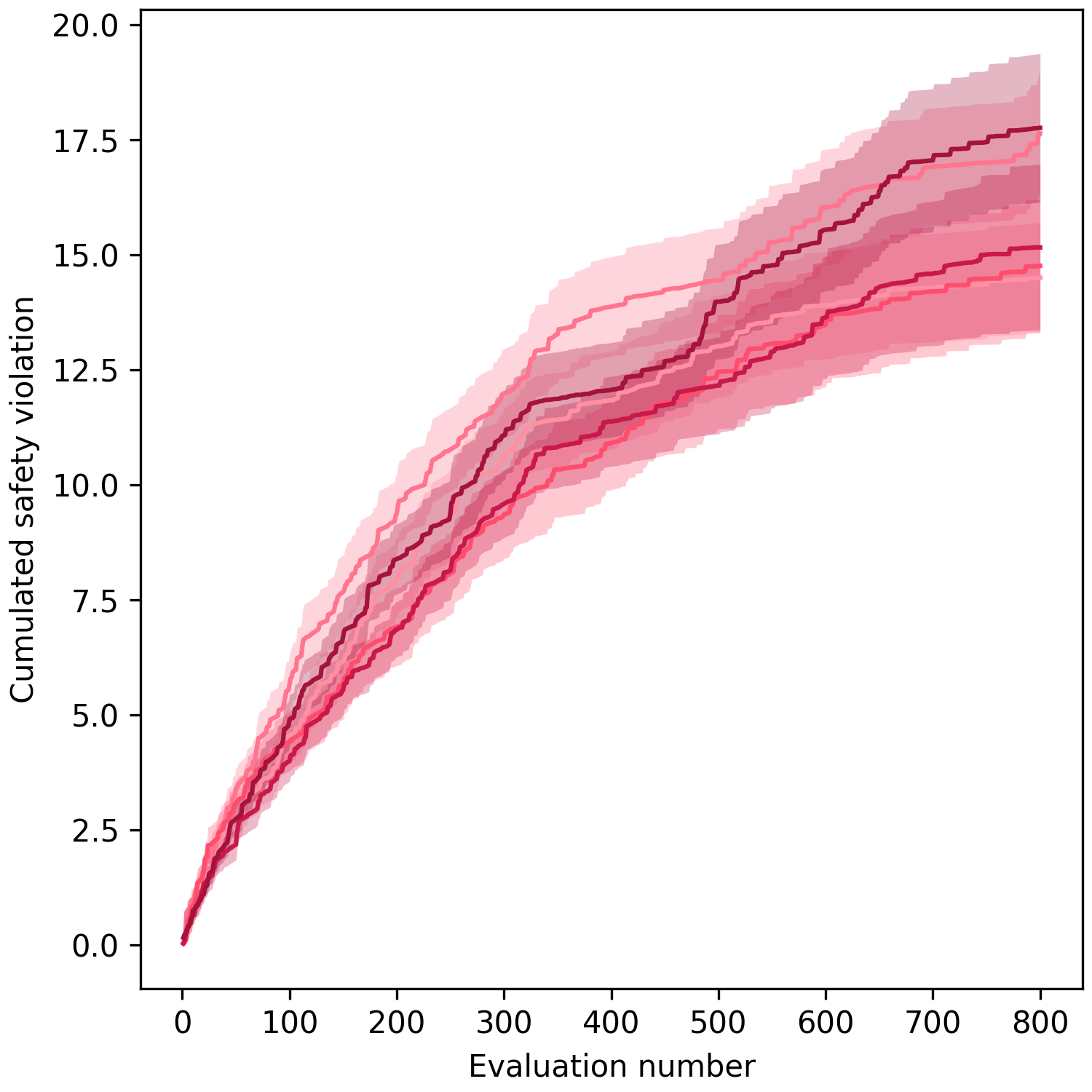}} 
\\
  \caption{Ablation study on confidence bound scalar $\beta$. }
 \label{fig:ablation}
\end{figure}

\subsection{Comparison with random-embedding BO}

We additionally run HesBO and BAxUS on the musculoskeletal model control task and the neural stimulation task with no safety constraints in simulation. Due to the algorithmic mechanism of HesBO and BAxUS, we cannot directly use the same initial point as \algo. Therefore we randomly sample initial points from their corresponding latent space. In HesBO, we set the same latent dimension number as in \algo. Table \ref{tab:rb_comp} shows the best objective function values found by algorithms (shown as mean $\pm$ 1 std).

We observe \algo~still outperforms HesBO and BAxUS across all tasks, even when optimizing under safety constraint. We think using IRVAE enables utilizing the pre-collected unlabeled data to learn a better representation than random projection.

\begin{table}
        \caption{Best objective function values found by algorithms.}
    \label{tab:rb_comp}
    \centering
    \begin{tabular}{cccccccc}
    \toprule 
       Algorithm &  SCS-ST & SCS-GA\\
         \midrule
        \algo   & $\boldsymbol{0.26\pm0.02}$ & $\boldsymbol{0.22\pm0.01}$\\
        HesBO  & $0.23\pm0.0$ & $0.19\pm0.02$\\
        BAxUS & $0.24\pm0.03$ & $0.19\pm0.05$\\
         \bottomrule
    \end{tabular}
    \vspace{5pt}

\end{table}

\subsection{Hand-writing Digital Generation}\label{sec:digit}

In hand-writing digital generation task, the goal is to generate images of target digit as thick as possible, while keeping the image valid for the required number. Using this task we can test the algorithm performances when the latent dimension is low. We trained a fully-connected IRVAE with a latent dimension of 6 and use a two-layer CNN model as the predictor. We set the objective function as the sum of image pixel intensities and the safety function as the prediction probability of target number. Since the CNN prediction is very sharp, we wrap the CNN output via a softmax layer with temperature as 200.
We set the sample budget as 200 including with 20 images of target digit as the initial data. 

We summarize the averaging performance in Table \ref{tab:digit}. We observed \algo~outperforms all baselines in terms of optimization performance and safety violation. Note that while \algo~efficiently finds highest objective, its safety violations is 63\% less than the second best method original SCBO.

\begin{table}[th]
\caption{Experiment results of constrained hand-writing digital generation. We evaluate algorithm performance of generating digital from 0 to 9 in terms of best found feasible objective value (higher is better) and cumulative safety violation (lower is better). Objective values are normalized by best feasible point in the MNIST dataset. The results are shown as mean performance $\pm$ one standard deviation across ten tasks.}
    \label{tab:digit}
\vspace{10pt}
\resizebox{\textwidth}{!}{
\begin{tabular}{cccccccccccc}
        \toprule 
        Metric &\algo & SCBO & CONFIG & cEI & CMAES \\
        \midrule

        Objective  & $\boldsymbol{1.14\pm0.12}$ & $1.08\pm0.18$ & $0.77\pm0.26$ & $0.77\pm0.26$  &  $0.68\pm0.12$  \\
        \hline
        Violation & $\boldsymbol{18.99\pm10.82}$ & $52.01\pm20.28$  & $72.09\pm17.85$  & $72.04\pm17.82$ &  $72.01\pm7.06$ \\
        \bottomrule 
        
    \end{tabular}
  
  }
  \vspace{5pt}

\end{table}

\subsection{Run time performance}

We ran HdSafeBO and baseline algorithms on the human neural stimulation control problem and recorded the average run-time of each iteration in Table \ref{tab:runtime}. Most baseline algorithms have a processing time of around 10 seconds.

In our problem setting, we consider the run-time difference to be marginal compared to the actual experiment time for each trial. For instance, in our real experiments, applying recommended stimulation parameters typically takes 1-2 minutes. Therefore, we compared optimization performances based on the number of evaluations.

\begin{table}[!htb]
    \centering
    \resizebox{\textwidth}{!}{%
    \begin{tabular}{c|cccccc}
    \toprule
        Method & HdSafeBO & LineBO & SCBO & CONFIG & cEI & CMAES\\
    \midrule
        Iteration time & $11.75\pm1.35$s &
$9.88\pm1.01$s &
$11.99\pm1.21$s &
$9.49\pm1.05$s &
$20.38\pm1.21$s &
$9.59\pm0.72$s \\
    \bottomrule
    \end{tabular}
    }
    \caption{Run-time per iteration in neural induced human motion control task. Results show mean $\pm$ 1 std averaged over 80 iterations.}
    \label{tab:runtime}
\end{table}

\section{Additional Related Work}\label{sec:add_relate}

Here we additionally discuss more works about high-dimensional Bayesian optimization besides dimension-reduction based BO and local BO. 

Due to the inversion of kernel function matrix, the complexity of GP inference scales exponentially with the sample number, limiting the search budget of high-dimensional problems. Sparse GP or variational GP is used to achieve scalable sampling over the high-dimensional space \citep{seeger2003fast, snelson2005sparse, hensman2013gaussian}.

Another line of work assume the addictive structure of the objective functions, and decomposes the function to solve the low-dimensional sub-problem decentrally\citep{kandasamy2015high, gardner2017discovering, wang2018batched, mutny2018efficient}.

To overcome over-exploration issue over the high-dimensional space, several works also propose to shape kernel prior to sample points near the search center \citep{oh2018bock, eriksson2021high}. 

\end{document}